\newtheorem{prop}{Proposition}
\theoremstyle{definition}
\newtheorem{defn}{Definition}[section]
\newcommand{\RNum}[1]{\uppercase\expandafter{\romannumeral #1\relax}}
\begin{document}

\title{Zero-Assignment Constraint for Graph Matching with Outliers}

\author{{Fudong Wang$^1$},
        {Nan Xue$^1$},
        {Jin-Gang Yu$^2$},
        {Gui-Song Xia$^1\thanks{Corresponding author}$}\\
{$^1$Wuhan University, China}\\
{\tt\small\{fudong-wang, xuenan, guisong.xia\}@whu.edu.cn}\\
{$^2$South China University of Technology, China}\\
{\tt\small jingangyu@scut.edu.cn}
}

\maketitle

\begin{abstract}
	Graph matching (GM), as a longstanding problem in computer vision and pattern recognition, still suffers from numerous cluttered outliers in practical applications. To address this issue, we present the zero-assignment constraint (ZAC) for approaching the graph matching problem in the presence of outliers. The underlying idea is to suppress the matchings of outliers by assigning zero-valued vectors to the potential outliers in the obtained optimal correspondence matrix. We provide elaborate theoretical analysis to the problem, {\em \ie}, GM with ZAC, and figure out that the GM problem with and without outliers are intrinsically different, which enables us to put forward a sufficient condition to construct valid and reasonable objective function. Consequently, we design an efficient outlier-robust algorithm to significantly reduce the incorrect or redundant matchings caused by numerous outliers.
    Extensive experiments demonstrate that our method can achieve the state-of-the-art performance in terms of accuracy and efficiency, especially in the presence of numerous outliers.
\end{abstract}

\section{Introduction}\label{sec:introduction}
In many real applications of computer vision and pattern recognition, the feature sets of interest represented as graphs are usually cluttered with numerous outliers~\cite{[2002-Belongie-pami],[Zeng2010],[2012-Yao-eccv],[2016-Shen-eccv]}, which often reduce the accuracy of GM. Although recent works on GM~\cite{[2010-Cho-eccv],[2013-Egozi],[2017-Huu-cvpr],[2011-Lee-cvpr],[2019-FRGM],[2016-Zhou-pami]} can achieve satisfactory results for simple graphs that consist of only inliers or a few outliers, they still lack of ability to tolerate numerous outliers arising in complicated graphs. Empirically, the inliers in one graph are nodes that have highly-similar corresponding nodes in the other graph, while the outliers do not. Based on the empirical criterion, the aforementioned methods hope to match inliers to inliers correctly and force outliers to only match outliers. However, due to the complicated mutual relationships between inliers and outliers, they usually result in incorrect matchings between inliers or redundant matchings between outliers  (\eg, Fig.~\ref{fig:fig1_com} (a)).

\begin{figure}
	\centering
	\subfigure[{\bf Left}: incorrect/redundant matchings (lines in red) caused by outliers. {\bf Right}:  generated (yellow) {\itshape v.s.} the ideal (red) correspondence matrix.]
	{\includegraphics[width=0.99\linewidth]{./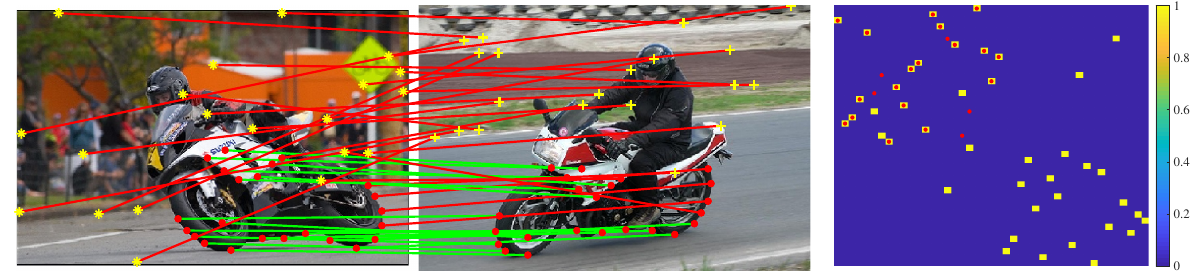}}
	
	\vspace{-3mm}
	\subfigure[{\bf Left}: our graph matching result. {\bf Right}: our correspondence matrix with zero-assignment constraint of outliers.]
	{\includegraphics[width=0.99\linewidth]{./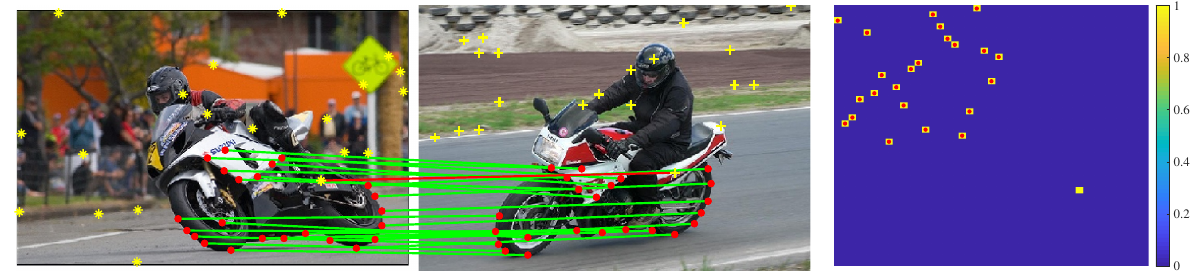}}
	\vspace{-3mm}
	\caption{{\bf ZAC} for graph matching in the presence of outliers. To suppress the undesired matchings of outliers in (a), we aim to assign the potential outliers with zero-valued vectors in our optimal correspondence matrix in (b), based on which we can both establish a theoretical foundation for graph matching with outliers and put forward an outlier identification approach that can significantly reduce incorrect or redundant matches caused by outliers in practice.}
	\label{fig:fig1_com}
\end{figure}

In this paper, we are motivated to address this challenge by introducing the zero-assignment constraint for outliers: unlike the previous methods that hope to match outliers only to outliers, it's more reasonable to suppress the matchings of outliers. Equivalently, we try to assign each potential outlier with a zero-valued vector ({\ie}, the zero-assignment constraint for outliers) in the solution of our objective function ({\eg}, the correspondence matrix in Fig.~\ref{fig:fig1_com} (b)). 

To make our idea more reasonable and practical, we try our efforts in two aspects. First, based on the zero-assignment constraint, we establish the theoretical bases including the formulation of inliers and outliers and the quantitative distinguishability between them, and then find out a sufficient condition such that the proposed objective function can only achieve its minimum at the ideal matching. Moreover, it also helps to demonstrate the intrinsic differences between GM with and without numerous outliers. Second, we propose an efficient GM algorithm consisting of fast optimization and explicit outlier identification. The optimization algorithm is modified based on the Frank-Wolfe method~\cite{[2015-Simon-nips]} combined with the k-cardinality linear assignment problem~\cite{[1997-DellAmico]} and has low space and time complexity. And then, the zero-assignment vectors in the optimal solution of our objective function can be used to assign the nodes in two graphs with joint probabilities, which measure whether the nodes are inliers or outliers and help to identify and remove the potential outliers in practice.


Our main contributions are summarized as follows:
\begin{itemize}
	\item[-] We establish the theoretical foundations for GM problem with outliers based on the zero-assignment constraint and elaborate quantitative analyses of inliers and outliers, on which bases we can theoretically put forward a sufficient condition to guide us how to construct valid and reasonable objective function.
	
	\item[-] We present an efficient GM algorithm with low space and time complexity by avoiding using the costly affinity matrix  and designing fast optimization algorithm. Combined with our outlier identification approach, we can achieve state-of-the-art performance for complicated graphs cluttered with numerous outliers.
\end{itemize}


\section{Related Work}

Known to be NP-complete~\cite{[1979-Garey],[1957-Koopmans],[1963-Lawler]}, the GM problem can only be solved in polynomial time with approximate solutions. Over the past decades, a myriad of literature have been extensively studied (see~\cite{[2004-Conte-IJPRAI],[2016-Yan-ICMR]} for surveys), we discuss the most related works in the following aspects.

{\bf Robustness to outliers}. 
The dual decomposition approach~\cite{[2013-Torresani-pami]} constructed a penalty potential in the objective function for unmatched features. The max-pooling-based method~\cite{[2014-Cho-cvpr]} was proposed to  avoid the adverse effect of false matches of outliers. A domain adaptation-based outlier-removal strategy proposed in~\cite{[2019-FRGM]} aimed to remove outliers as a pre-processing step. However, they directly rely on empirical criterions of outliers and can not deal with complicated situations. In our work, we both explain theoretical analyses of outliers and present an efficient outlier identification approach, by which we can achieve much better matching accuracy in complicated applications.

{\bf Interpretability for graph matching}.
The probability-based works~\cite{[2008-Zass-cvpr],[2013-Egozi]} formulated GM from the maximum-likelihood estimation perspective. A random walk view~\cite{[2010-Cho-eccv]} was introduced by simulating random walks with re-weighting jumps for GM. Some machine learning-based works~\cite{[2009-Caetano-pami],[2012-Leordeanu-ijcv]} went further to adjust attributes of graphs or improve the affinity matrix $\mathbf{K}$ (in Eq.~\eqref{eq:gm=lawler}) based on priors learned from real data. A functional representation framework~\cite{[2019-FRGM]} was proposed to give geometric insights for both general and Euclidean GM. The pioneering works~\cite{[2018-Zanfir],[2019_yjc]} presented an end-to-end deep learning framework for GM. Our work aims to establish the mathematical foundation for GM with outliers and enhance its theoretical rationality.

{\bf Computational efficiency}.
Some existing works aimed to reduce the costly space complexity caused by $\mathbf{K}$ in Eq.~\eqref{eq:gm=lawler}. A typical work was the factorized graph matching~\cite{[2016-Zhou-pami]}, which factorized $\mathbf{K}$ as Kronecker product of several smaller matrices.  However, it is highly time-consuming in practice due to the verbose iterations during optimization. Some methods like the graduated assignment method~\cite{[1996-Gold]} and the integer-projected fixed point algorithm~\cite{[2009-Leordeanu-nips]} proposed specific fast approximations while ended with unsatisfactory matching results. As comparison, our method has low space and time complexity and achieves better trade-off between time consumption and matching accuracy.

\section{Graph matching with outliers}\label{sec:bpgm}
This section revisits the general formulation of GM and presents the theoretical foundation for GM with outliers.

\subsection{General formulation of graph matching}
Given two attributed graphs  $\mathcal{G}=\{\mathcal{V},\mathcal{E}\},\mathcal{G}'=\{\mathcal{V}',\mathcal{E}'\}$, where $\mathcal{V}=\{V_i\}_{i=1}^m$ and $\mathcal{V}'=\{V'_a\}_{a=1}^n$ represent the node sets (assume $m\leq n$), $\mathcal{E}\subseteq \mathcal{V}\times \mathcal{V}$ and $\mathcal{E}'\subseteq \mathcal{V}'\times \mathcal{V}'$ denote the edge sets. Generally, for each graph, {\eg}, $\mathcal{G}$, the edges are represented by a (weighted) adjacency matrix $\mathcal{E}\in \mathbb{R}^{m\times m}$, where $\mathcal{E}_{ij} > 0$ if there is an edge $(V_{i},V_{j})$, and $\mathcal{E}_{ij}=0$ otherwise. In practice, graph $\mathcal{G}$ is usually associated with node  attribute $\mathbf{v}_{i}\in \mathbb{R}^{d_v}$ of node $V_i$ and edge attribute $\mathbf{A}_{ij}\in\mathbb{R}^{d_e}$ of edge $\mathcal{E}_{ij}$; the same to graph $\mathcal{G}'$.

Solving GM problem is to find an optimal binary correspondence $\mathbf{P}\in \left\{0,1\right\}^{m\times n}$ ,
where $\mathbf{P}_{ia}=1$ when the nodes $V_i\in\mathcal{V}$ and $V'_a\in\mathcal{V}'$ are matched, and $\mathbf{P}_{ia}=0$ otherwise. To find such an optimal correspondence, GM methods generally minimize or maximize an objective function that measures the mutual (dis-)similarity between graphs.

As a typical Quadratic Assignment Problem (QAP), GM formulated as Lawler's QAP~\cite{[1963-Lawler],[2005-Leordeanu],[2009-Leordeanu-nips],[2010-Cho-eccv],[2016-Zhou-pami]} has been favored to maximize the sum of node and edge similarities
{\small
	\begin{equation}
	\max_{\mathbf{P}\in\mathcal{P}}\mathbf{P}_v^{\text{T}}\mathbf{K}\mathbf{P}_v =
	\sum_{i,a}\mathbf{P}_{ia}\mathbf{K}_{ia;ia} + \sum_{{\tiny{\substack{(i,j),(a,b)}}}}\mathbf{P}_{ia}\mathbf{K}_{ij;ab}\mathbf{P}_{jb},
	\label{eq:gm=lawler}
	\end{equation}}where $\mathbf{P}_v$ is the columnwise vectorized replica of $\mathbf{P}$. The affinity matrix $\mathbf{K}\in\mathbb{R}^{mn\times mn}$ has diagonal element $\mathbf{K}_{ia;ia}$ measuring the node affinity calculated with node attributes $(\mathbf{v}_i,\mathbf{v}'_a)$ and non-diagonal element $\mathbf{K}_{ia;jb}$ measuring the edge affinity calculated with edge attributes $(\mathbf{A}_{ij},\mathbf{B}_{ab})$.

Another famous formulation is Koopmans-Beckmann's QAP~\cite{[1957-Koopmans],[1988-Umeyama],[1993-Almohamad],[2009-Zaslavskiy-pami]}, which maximizes a trace-form objective function measuring the node and edge similarities
{\begin{equation}
	\max_{\mathbf{P}\in\mathcal{P}} \text{tr}(\mathbf{U}^{\text{T}}\mathbf{P})+\lambda\text{tr}(\mathcal{E}\mathbf{P}\mathcal{E}'\mathbf{P}^{\text{T}}),
	\label{eq:gmKoom22}
	\end{equation}}where $\{\mathbf{U}_{ia}\}\in \mathbb{R}^{m\times n}$ measures the node similarity between $V_i$ and $V'_a$ and $\lambda\ge 0$ is a weight. 


Generally, GM methods impose the one-to-(at most)-one constraint, {\ie}, the feasible field $\mathcal{P}$ can be defined as
{\begin{equation}\label{eq:feasible1}
	\mathcal{P}\triangleq \left\{\mathbf{P}\in \left\{0,1\right\}^{m\times n}; \mathbf{P1}=\mathbf{1},\mathbf{P}^{\text{T}}\mathbf{1}\le \mathbf{1}\right\},
\end{equation}}where $\mathbf{1}$ is a columnwise unit vector. In fact, Eq.~\eqref{eq:feasible1} means that both inliers and outliers are equally treated to find their correspondences. Some methods like~\cite{[2009-Caetano-pami],[2013-Torresani-pami]} replace $\mathbf{P1}=\mathbf{1}$ by $\mathbf{P1}\leq\mathbf{1}$ to relax the one-to-(at most)-one constraint. However, they still lack of intrinsic theoretical analyses for the numerous outliers arising in both graphs.


\subsection{Zero-assignment constraint for outliers}

As stated previously in Sec.~\ref{sec:introduction}, we aim to only match inliers to inliers and suppress the matchings of outliers. To achieve our goal, we present the zero-assignment constraint for outliers in this section. Denoting the number of inliers in $\mathcal{G}'$ and $\mathcal{G}$ as $k$ $(0<k\leq m\leq n)$, for better understanding, we first introduce some basic definitions in the following.

\begin{defn}
	Denote $\mathscr{A}=\{1,2,...,m\}$ as the index set of nodes in graph $\mathcal{G}$. The index sets of inliers and outliers of $\mathcal{G}$ are respectively defined as,	
	\begin{align} 	
	\mathscr{A}_I &\triangleq \{i\in\{1,2,...,m\};\mathcal{V}_i \text{ is an inlier of } \mathcal{G} \},\\
	\mathscr{A}_O &\triangleq \{o\in\{1,2,...,m\};\mathcal{V}_o \text{ is an outlier of } \mathcal{G} \}.
	\end{align}
	The index sets $\mathscr{B}=\{1,2,...,n\}$, $\mathscr{B}_I$ and $\mathscr{B}_O$ are similarly defined for graph $\mathcal{G}'$. Obviously, we have $|\mathscr{A}_I|=|\mathscr{B}_I|=k$. The inliers and outliers sets are complementary and disjoint.
	\label{def:inl_out}
\end{defn}
\begin{prop}\label{prop:prop1}
	\begin{align}
	&\mathscr{A}_I\cup\mathscr{A}_O=\mathscr{A},\quad\mathscr{A}_I\cap\mathscr{A}_O=\varnothing,\\
	&\mathscr{B}_I\cup\mathscr{B}_O=\mathscr{B},\quad\mathscr{B}_I\cap\mathscr{B}_O=\varnothing.
	\end{align}
	where $\varnothing$ denotes the empty set.
\end{prop}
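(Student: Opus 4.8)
The plan is to derive the statement directly from Definition~\ref{def:inl_out}, since the proposition is essentially a set-theoretic restatement of the closing remark of that definition. The crucial observation is that the labels ``inlier'' and ``outlier'' are attached to each node of $\mathcal{G}$ by a single binary predicate---namely, whether the node possesses a highly-similar corresponding node in $\mathcal{G}'$ in the sense of the empirical criterion in Sec.~\ref{sec:introduction}---so that $\mathscr{A}_I$ and $\mathscr{A}_O$ are defined through mutually negating conditions.

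First I would fix an arbitrary index $i\in\mathscr{A}=\{1,\dots,m\}$ and observe that the node $\mathcal{V}_i$ is, by construction, classified as exactly one of the two types: either it meets the inlier criterion (in which case $i\in\mathscr{A}_I$) or it fails it (in which case $i\in\mathscr{A}_O$). Invoking the law of excluded middle on this predicate yields $\mathscr{A}\subseteq\mathscr{A}_I\cup\mathscr{A}_O$; and since $\mathscr{A}_I,\mathscr{A}_O\subseteq\mathscr{A}$ trivially, the reverse inclusion holds, establishing $\mathscr{A}_I\cup\mathscr{A}_O=\mathscr{A}$. For disjointness I would argue by contradiction: if some $i$ belonged to $\mathscr{A}_I\cap\mathscr{A}_O$, then $\mathcal{V}_i$ would simultaneously satisfy and violate the inlier criterion, which is impossible, so $\mathscr{A}_I\cap\mathscr{A}_O=\varnothing$. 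The identical argument, applied to the node set $\mathcal{V}'=\{V'_a\}_{a=1}^n$ with index set $\mathscr{B}=\{1,\dots,n\}$, gives $\mathscr{B}_I\cup\mathscr{B}_O=\mathscr{B}$ and $\mathscr{B}_I\cap\mathscr{B}_O=\varnothing$.

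I do not expect any genuine obstacle here, as the result is definitional in nature; the only point requiring care is to confirm that the inlier/outlier dichotomy is indeed exhaustive and exclusive, i.e.\ that the defining predicate is well-posed for every node. This is guaranteed by the formulation adopted in Definition~\ref{def:inl_out}, which partitions the nodes by a genuine either/or condition rather than by two independently-chosen thresholds that might overlap or leave gaps. Consequently, the proof is little more than unwinding the definitions, and its real role is to make explicit the partition structure ($|\mathscr{A}_I|=|\mathscr{B}_I|=k$ together with the complementary outlier sets) that the subsequent construction of the objective function and the zero-assignment constraint will rely upon.
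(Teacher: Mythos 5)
Your proof is correct and matches the paper's treatment: the paper offers no separate argument, treating the proposition as an immediate consequence of the closing remark of Definition~\ref{def:inl_out} that the inlier and outlier sets are complementary and disjoint, which is exactly the definitional unwinding you carry out. Your extra care in noting that the dichotomy is exhaustive and exclusive (a single either/or predicate rather than two independent thresholds) is a reasonable clarification but does not depart from the paper's route.
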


Next, we derive the zero-assignment constraint for outliers. Mathematically, the matching between $\mathcal{G}$ and $\mathcal{G}'$ consisting of inliers and outliers can be defined by a partial permutation $\mathbf{\tau}$ and a partial permutation matrix $\mathbf{P}$ as follows.
\begin{defn} The {\em partial permutation} $\tau$ between $\mathcal{G}$ and $\mathcal{G}'$ is defined as $\tau : \mathscr{A} \to \mathscr{B}$,
	\begin{equation}
		 i\mapsto a=\tau(i)\in\mathscr{B}_I \text{ if } i\in\mathscr{A}_I; a=\varnothing \text{ if } i\in\mathscr{A}_O.		 
	\end{equation}
	 And the inverse of $\tau$ can also be defined as $\tau^{-1} : \mathscr{B} \to \mathscr{A}$,
	 \begin{equation}
	  a\mapsto i=\tau^{-1}(a)\in\mathscr{A}_I \text{ if } a\in\mathscr{B}_I; i=\varnothing \text{ if } a\in\mathscr{B}_O.
	 \end{equation}\label{def:permu1}
\end{defn}

Given $\tau$, the matching (or correspondence) between $\mathcal{G}$ and $\mathcal{G}'$ can be equivalently expressed by the {\em partial permutation matrix} $\mathbf{P}\in\{0,1\}^{m\times n}$ compatible with $\tau$ as

\begin{defn} For $\mathbf{P}\in\{0,1\}^{m\times n}$ compatible with $\tau$,
	\begin{itemize}
		\item[-] {\em One-to-one constraint for inliers:} $\forall i\in\mathscr{A}_I$,
	\begin{equation}
	\mathbf{P}_{i,a=\tau(i)}=1,\mathbf{P}_{i,a\neq\tau(i)}=0, a\in\mathscr{B}_I.
	\end{equation}
	\item[-] {\em Zero-assignment constraint for outliers:} 
	\begin{equation}
	\mathbf{P}_{i,:}\equiv\mathbf{0}^{\text{T}}, \forall i\in\mathscr{A}_O \text{ and } \mathbf{P}_{:,a}\equiv\mathbf{0}, \forall a\in\mathscr{B}_O.
	\end{equation}
	\end{itemize}
where $\mathbf{P}_{i,:}$ (or $\mathbf{P}_{:,a}$) is a row (or column) vector of $\mathbf{P}$, and $\mathbf{0}$ is a columnwise zero vector.	
\label{def:permu2}
\end{defn}

By this means, the feasible filed $\mathcal{P}_k$ can be redefined as
\begin{equation}
\left\{\mathbf{P}\in \{0,1\}^{m\times n}; \mathbf{P1}\leq\mathbf{1},\mathbf{P}^{\text{T}}\mathbf{1}\le \mathbf{1},\mathbf{1}^{\text{T}}\mathbf{P}\mathbf{1}=k\right\}.
\end{equation}	
The explicit equation constraint $\mathbf{1}^{\text{T}}\mathbf{P}\mathbf{1}=k$ will be used to both present a proof for the rationality of our proposed objective function in Sec.~\ref{sec:obj_condition} and design an efficient optimization algorithm in Sec.~\ref{sec:optimiza}.

\subsection{Consistency and distinguishability}
Empirically, the GM methods assume that the unary and pairwise attributes of inlier $i\in\mathscr{A}_I$ and edge $(i,j)\in\mathscr{A}_I\times \mathscr{A}_I$ are consistent with those of the ideal matchings $a\in\mathscr{B}_I$ and $(a,b)\in\mathscr{B}_I\times \mathscr{B}_I$, while the outliers are on the contrary. Based on this empirical criterion, we furthermore elaborate a quantitative consistency of inliers and distinguishability between inliers and outiers, on which bases the rationality of our objective function can be guaranteed. 

Denote $\{\mathbf{D}_{ia}\}_{ia}$ as the dissimilarity between nodes $V_i\in\mathcal{V}$ and $V_a'\in\mathcal{V}'$, $\{\mathbf{A}_{ij}\}_{ij}$ and $\{\mathbf{B}_{ab}\}_{ab}$ are the edge attributes of edges $(V_i,V_j)\in\mathcal{E}$ and $(V'_a,V'_b)\in\mathcal{E}'$. Meanwhile, denote $\{\tau^*,\mathbf{P}^*\in\mathcal{P}_k\}$ as the ideal matching between $\mathcal{G}$ and $\mathcal{G}'$. Consequently, beyond the empirical criterion, we can induce the consistency of inliers and distinguishability between inliers and outliers by $\{\tau^*,\mathbf{P}^*\in\mathcal{P}_k\}$ as follows.

\begin{prop} Consistency between inliers.\label{prop:con_dis}
	\begin{itemize}\label{prop:consistency}
	\item[-]{Unary consistency}: 
	$\forall i\in\mathscr{A}_I,\forall a\in\mathscr{B}_I$,
		{\begin{align}
		&\mathbf{D}_{ia'}=\min\{\mathbf{D}_{ia},a\in\mathscr{B}\} \Leftrightarrow a'=\tau^*(i),\\
		&\mathbf{D}_{i'a}=\min\{\mathbf{D}_{ia},i\in\mathscr{A}\} \Leftrightarrow i'=\tau^{*-1}(a).
		\end{align}}
	\item[-]{Pairwise consistency}: 
	$\forall i,j \in\mathscr{A}_I,\forall a,b \in\mathscr{B}_I$,
		{\begin{align}
			||\mathbf{A}_{ij}-\mathbf{B}_{a'b'}|| &= \min\{||\mathbf{A}_{ij}-\mathbf{B}_{ab}||,a,b\in\mathscr{B}\}\nonumber\\
			&\Leftrightarrow a'=\tau^*(i),b'=\tau^*(j),\\
			||\mathbf{B}_{ab}-\mathbf{A}_{i'j'}|| &= \min\{||\mathbf{B}_{ab}-\mathbf{A}_{ij}||,i,j\in\mathscr{A}\}\nonumber\\
			&\Leftrightarrow i'=\tau^{*-1}(a),j'=\tau^{*-1}(b).
		\end{align}}
	\end{itemize}
\end{prop}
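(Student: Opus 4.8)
The plan is to read this Proposition as the quantitative counterpart of the empirical criterion stated just above it, and to prove each biconditional by splitting it into an \emph{achievement} direction ($\Leftarrow$) and a \emph{uniqueness} direction ($\Rightarrow$). I would first settle the two unary equivalences and then transport the same argument verbatim to the pairwise ones, replacing node dissimilarities $\mathbf{D}_{ia}$ by edge dissimilarities $\|\mathbf{A}_{ij}-\mathbf{B}_{ab}\|$ throughout.

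For the achievement direction ($\Leftarrow$) of the first unary equivalence I would argue directly from the meaning of an inlier: by Definition~\ref{def:inl_out} together with the partial permutation $\tau^*$ of Definition~\ref{def:permu1}, an inlier $i\in\mathscr{A}_I$ is exactly a node whose attributes are consistent with those of its ideal counterpart $\tau^*(i)\in\mathscr{B}_I$. Hence among all candidates $a\in\mathscr{B}$ the smallest dissimilarity is attained at $a'=\tau^*(i)$, i.e. $\mathbf{D}_{i,\tau^*(i)}=\min\{\mathbf{D}_{ia},a\in\mathscr{B}\}$. This step is essentially a restatement of the empirical criterion and carries no real difficulty.

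For the uniqueness direction ($\Rightarrow$) I would invoke strict separation: $\mathbf{D}_{i,\tau^*(i)}<\mathbf{D}_{ia}$ for every $a\neq\tau^*(i)$ in $\mathscr{B}$. This is the non-degeneracy condition that makes an inlier distinguishable from \emph{every} other node, inlier or outlier. Under it, any $a'$ with $\mathbf{D}_{ia'}=\min\{\mathbf{D}_{ia},a\in\mathscr{B}\}$ must coincide with $\tau^*(i)$, closing the equivalence. The second unary equivalence is the symmetric statement obtained by exchanging $\mathscr{A}$ with $\mathscr{B}$ and using $\tau^{*-1}$, so the identical two-step argument applies; the two pairwise equivalences follow in the same way, now using the consistency of an inlier edge $(i,j)\in\mathscr{A}_I\times\mathscr{A}_I$ with its image $(\tau^*(i),\tau^*(j))$ and the corresponding strict separation for edge dissimilarities.

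The step I expect to be the main obstacle is precisely this uniqueness direction: the forward implication genuinely fails without a strict-separation hypothesis, because a tie---most dangerously a stray outlier $a\in\mathscr{B}_O$ that happens to be as similar to $i$ as the true match---would produce a minimizer different from $\tau^*(i)$ and break the $\Rightarrow$ arrow. Thus the crux is to state the distinguishability of inliers as a strict inequality and to confirm that outliers cannot compete for the minimum; once that is in place, everything else is a direct unwinding of Definitions~\ref{def:inl_out} and~\ref{def:permu1}.
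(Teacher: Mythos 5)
Your attempt treats this Proposition as a theorem to be derived from Definitions~\ref{def:inl_out} and~\ref{def:permu1}, but in the paper it carries no proof because it is not a derived statement: the surrounding text says the authors ``induce the consistency of inliers and distinguishability between inliers and outliers by $\{\tau^*,\mathbf{P}^*\in\mathcal{P}_k\}$'' --- that is, the displayed equivalences \emph{are} the quantitative criterion, a modeling postulate that replaces the informal empirical notion of ``inlier'' and that later serves as the hypothesis base for the sufficient condition in Proposition~\ref{prop:prop_condition}. Nothing quantitative is available upstream from which to derive it: Definition~\ref{def:inl_out} is purely set-theoretic (it introduces index sets and says ``$\mathcal{V}_i$ is an inlier of $\mathcal{G}$'' without attaching any attribute condition), and Definition~\ref{def:permu1} only fixes the combinatorics of $\tau$. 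Consequently your ($\Leftarrow$) step is circular --- the claim that ``an inlier $i\in\mathscr{A}_I$ is exactly a node whose attributes are consistent with those of its ideal counterpart $\tau^*(i)$'' is precisely the content this Proposition is introducing, not something Definitions~\ref{def:inl_out}--\ref{def:permu1} supply --- and your ($\Rightarrow$) step imports a strict-separation hypothesis ($\mathbf{D}_{i\tau^*(i)}<\mathbf{D}_{ia}$ for all $a\neq\tau^*(i)$) that appears nowhere in the paper; note that the companion distinguishability Proposition is stated only with non-strict inequalities $\geq$.

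That said, your diagnosis has genuine value as a critique of the formulation rather than as a proof: you are right that, read as a biconditional, the statement forces uniqueness of the minimizer and hence implicitly excludes ties, whereas under the non-strict $\geq$ of the distinguishability Proposition an outlier $a\in\mathscr{B}_O$ could tie with the true match $\tau^*(i)$ and falsify the $\Rightarrow$ arrow. The correct posture, matching the paper, is to state these equivalences as the defining quantitative assumption on the ideal matching (strengthened to strict inequalities if one wants the $\Leftrightarrow$ to be self-consistent), and then do the actual proving only for Proposition~\ref{prop:prop_condition}, where the paper's argument (in the supplementary material) uses these criteria as hypotheses. As a proof of the statement itself, your attempt has a genuine gap: every hypothesis it needs is exactly the content being postulated.
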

\begin{prop} Distinguishability between inliers and outliers.
	\begin{itemize}\label{prop:dis}
	\item[-]{Unary distinguishability}:
	$\forall (i,a)\in\mathscr{A}_O\times\mathscr{B}$ or $\mathscr{A}\times\mathscr{B}_O$,
	\begin{align}
		\mathbf{D}_{ia}\geq \max \{\mathbf{D}_{i'\tau^*(i')},i'\in\mathcal{A}_I\}.
	\end{align}
	\item[-]{Pairwise distinguishability}:
	$\forall (i,a),(j,b) \in\mathscr{A}_O\times \mathscr{B}$ or  $\mathscr{A}\times \mathscr{B}_O$,
	\begin{align}
	||\mathbf{A}_{ij}-\mathbf{B}_{ab}|| 
	&\geq \max\{||\mathbf{A}_{i'j'}-\mathbf{B}_{\tau^*(i')\tau^*(j')}||\nonumber\\
	&\qquad\qquad,i',j'\in\mathcal{A}_I\},\\
	||\mathbf{B}_{ab}-\mathbf{A}_{ij}|| 
	&\geq \max\{||\mathbf{B}_{a'b'}-\mathbf{A}_{\tau^{*-1}(a')\tau^{*-1}(b')}||\nonumber\\
	&\qquad\qquad,a',b'\in\mathcal{B}_I\}.
	\end{align}
	\end{itemize}
where $||\cdot||$ is an Euclidean norm.
\end{prop}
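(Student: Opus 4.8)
The plan is to treat the two distinguishability statements not as deductions from more primitive axioms but as the quantitative formalization of the empirical inlier/outlier criterion recalled in Definition~\ref{def:inl_out} and the surrounding text, anchored to the fixed ideal matching $\{\tau^*,\mathbf{P}^*\in\mathcal{P}_k\}$. First I would pin down, once and for all, the two threshold quantities that appear on the right-hand sides: the worst unary cost incurred by a correct inlier match,
\[
\delta_u \triangleq \max\{\mathbf{D}_{i'\tau^*(i')} : i'\in\mathscr{A}_I\},
\]
and its pairwise analogue $\delta_e \triangleq \max\{\|\mathbf{A}_{i'j'}-\mathbf{B}_{\tau^*(i')\tau^*(j')}\| : i',j'\in\mathscr{A}_I\}$. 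Both maxima range over finite index sets and are therefore attained, so $\delta_u$ and $\delta_e$ are well-defined finite numbers. The whole proposition then reduces to showing that every cost indexed by a pair touching an outlier is bounded below by the appropriate threshold.

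For the unary case I would split the outlier-touching index set $(\mathscr{A}_O\times\mathscr{B})\cup(\mathscr{A}\times\mathscr{B}_O)$ into its two natural pieces and argue on each. If $i\in\mathscr{A}_O$, then by the defining property of an outlier of $\mathcal{G}$ — that it possesses no highly-similar counterpart in $\mathcal{G}'$ — node $V_i$ has no admissible partner under $\tau^*$, so its dissimilarity to every $V'_a$ must exceed the similarity level that characterizes correct inlier matches; quantitatively this is exactly $\mathbf{D}_{ia}\ge\delta_u$. The symmetric argument handles $a\in\mathscr{B}_O$. Here I would lean on Proposition~\ref{prop:con_dis}: the unary consistency equivalences identify $\delta_u$ as the exact dividing value, since for inliers the minimum dissimilarity is attained precisely at $\tau^*$, so no inlier cost can exceed $\delta_u$ while, by the outlier criterion, no outlier-touching cost can fall below it. The pairwise case follows the same template with the edge attributes $\mathbf{A},\mathbf{B}$ and the Euclidean norm replacing the unary dissimilarity, invoking the pairwise consistency part of Proposition~\ref{prop:con_dis} to identify $\delta_e$ as the separating threshold; the only bookkeeping is that a pair of matches $(i,a),(j,b)$ touches an outlier as soon as one of $i,j$ lies in $\mathscr{A}_O$ or one of $a,b$ lies in $\mathscr{B}_O$, which is precisely the index set stated.

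The hard part — and the point I would flag explicitly — is that these inequalities are fundamentally \emph{modeling assumptions} rather than theorems derived from something weaker. The empirical criterion (``inliers have highly-similar counterparts, outliers do not'') is qualitative, and the step that converts it into the clean separation governed by $\delta_u$ and $\delta_e$ is where all the content resides. A fully rigorous treatment therefore requires either (i) elevating these very inequalities to the definition of ``inlier''/``outlier,'' so that the proposition becomes true by construction and the consistency equivalences carry the substantive weight; or (ii) positing an explicit data/noise model in which inlier costs are drawn below a gap and outlier costs above it, under which the bounds hold deterministically or with high probability. I expect the cleanest route is (i): take the consistency and distinguishability conditions as the operational definition of the ideal matching $\tau^*$, after which the present statement is an immediate rewriting that packages the outlier-side bounds in the form later needed to guarantee that the proposed objective attains its minimum only at $\mathbf{P}^*$.
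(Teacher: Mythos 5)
Your proposal is correct and takes essentially the same route as the paper: the paper gives no deductive proof of this proposition either, presenting it (alongside Proposition~\ref{prop:con_dis}) as the quantitative formalization ``induced'' from the empirical inlier/outlier criterion relative to the ideal matching $\{\tau^*,\mathbf{P}^*\}$ --- precisely your option (i), where the inequalities hold by construction and then serve as hypotheses for the sufficient-condition result in Proposition~\ref{prop:prop_condition}. Your explicit flagging that the thresholds $\delta_u,\delta_e$ encode a modeling assumption not derivable from consistency alone (which only bounds each inlier's off-match costs by its \emph{own} ideal cost, not by the worst-case maximum) is exactly the right reading of the paper's stance.
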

By this means, we present a quantitative mathematical criteria of the local characteristics and mutual relationships of inliers and outliers, which is more concise and clear than empirical criteria. More importantly, the propositions above inspires us how to construct a reasonable objective function and find out a sufficient condition for proving the rationality.

\subsection{Objective function with sufficient condition}\label{sec:obj_condition}
A reasonable objective function $F(\mathbf{P})$ should satisfy two main properties: (1) preserve the unary and pairwise consistencies between the matched nodes (or edges) of two graphs and (2) achieve its optimum only at the ideal matching $\mathbf{P}^*$. Overall, our objective function is defined as
\begin{equation}\label{eq:obj}
	\min_{\mathbf{P}\in\mathcal{P}_k} F(\mathbf{P})=\lambda_1 F_u(\mathbf{P})+\lambda_2F_p(\mathbf{P}),	
\end{equation}where $F_u(\mathbf{P})$ and $F_p(\mathbf{P})$ are the unary and pairwise potentials. Precisely, we set $F_u(\mathbf{P})=\sum\limits_{ia}\mathbf{D}_{ia}\mathbf{P}_{ia}$ and
\begin{align}
	F_p(\mathbf{P})&\triangleq F_{p_1}(\mathbf{P}) + F_{p_2}(\mathbf{P})\\ &\triangleq\sum_{ij}\mathcal{E}_{ij}||\mathbf{A}_{ij}-\sum_{a,b}\mathbf{P}_{ia}\mathbf{B}_{ab}\mathbf{P}_{jb} ||^2 \nonumber\\
	&\quad +\sum_{ab}\mathcal{E}'_{ab}||\mathbf{B}_{ab}-\sum_{i,j}\mathbf{P}_{ia}\mathbf{A}_{ij}\mathbf{P}_{jb}||^2\\
	&\triangleq||\mathbf{A}-\mathbf{PB}\mathbf{P}^{\text{T}} ||_\mathcal{E}^2 + ||\mathbf{B}-\mathbf{P}^{\text{T}}\mathbf{AP}||_{\mathcal{E}'}^2.
\end{align}

The property (1) is guaranteed since the minimization of $F(\mathbf{P})$ tends to find the minimizer $\hat{\mathbf{P}}$ that matches the nodes and edges in $\mathcal{G}$ (or $\mathcal{G}'$) to the mostly-consistent nodes and edges in $\mathcal{G}'$ (or $\mathcal{G}$). Next, we should make sure that it also satisfies the property (2). However, due to the cluttered outliers arising in both graphs, it may not hold for any arbitrarily given weighted adjacency matrices $\mathcal{E},\mathcal{E}'$ or edge attributes $\mathbf{A},\mathbf{B}$. Furthermore, we put forward a sufficient condition to support it. 

\begin{prop}Sufficient condition for objective function. Assume that the weighted adjacency matrices $\mathcal{E},\mathcal{E}'$ and  edge attributes $\mathbf{A},\mathbf{B}$ satisfy that
	\begin{align}
		\mathcal{E}_{i\in\mathscr{A}_I,j\in\mathscr{A}_I} 
		&\geq \mathcal{E}_{i\in\mathscr{A},j\in\mathscr{A}_O}, \mathcal{E}_{i\in\mathscr{A}_O,j\in\mathscr{A}}\label{eq:prop3_1},\\
		||\mathbf{A}_{i\in\mathscr{A}_I,j\in\mathscr{A}_I}||
		&\geq ||\mathbf{A}_{i\in\mathscr{A},j\in\mathscr{A}_O}||, ||\mathbf{A}_{i\in\mathscr{A}_O,j\in\mathscr{A}}||,\label{eq:prop3_2}
	\end{align}
	and the same to $\mathcal{E}'$ and $\mathbf{B}$. Then, it's sufficient to prove that 
	\begin{equation}
		\forall \mathbf{P}\in\mathcal{P}_k, ~F(\mathbf{P})\geq F(\mathbf{P}^*),\label{eq:rational}
	\end{equation}the equation holds if and only if~~$\mathbf{P}=\mathbf{P}^*$.
	\label{prop:prop_condition}
\end{prop}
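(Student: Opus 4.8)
The plan is to prove the bound termwise: I would show that each of the three potentials $F_u$, $F_{p_1}$ and $F_{p_2}$ is separately minimized over $\mathcal{P}_k$ at $\mathbf{P}^*$, and that for every $\mathbf{P}\neq\mathbf{P}^*$ at least one of them is strictly larger; since $\lambda_1,\lambda_2>0$, summing then yields \eqref{eq:rational} together with the equality characterization. The first step is bookkeeping. Because every $\mathbf{P}\in\mathcal{P}_k$ is a partial permutation, it is determined by its set of matched rows $S\subseteq\mathscr{A}$ with $|S|=k$ and the induced injection $\sigma:S\to\mathscr{B}$. Then $\sum_{a,b}\mathbf{P}_{ia}\mathbf{B}_{ab}\mathbf{P}_{jb}$ collapses to $\mathbf{B}_{\sigma(i)\sigma(j)}$ when $i,j\in S$ and vanishes otherwise, so each potential rewrites as an explicit sum indexed by matched and unmatched rows and columns. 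Evaluating at $\mathbf{P}^*$, where $S=\mathscr{A}_I$ and $\sigma=\tau^*$, records the target values: inlier-inlier edges contribute the consistent discrepancies $\mathcal{E}_{ij}\|\mathbf{A}_{ij}-\mathbf{B}_{\tau^*(i)\tau^*(j)}\|^2$ while every outlier-incident edge contributes $\mathcal{E}_{ij}\|\mathbf{A}_{ij}\|^2$.

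For the unary potential, $F_u(\mathbf{P})=\sum_{i\in S}\mathbf{D}_{i\sigma(i)}$ is the cost of a $k$-cardinality assignment, and I would treat it by an exchange argument. If some inlier row $i\in\mathscr{A}_I$ is matched to $a\neq\tau^*(i)$, or some inlier column is taken by an outlier row, then the strict unary consistency (the $\Leftrightarrow$ making $\tau^*(i)$ the unique row-minimizer and $\tau^{*-1}(a)$ the unique column-minimizer) shows that replacing the offending match by the correct inlier match strictly decreases the total. If instead an inlier is left unmatched only because a match is spent on an outlier-to-outlier pair, the unary distinguishability gives a decrease that is merely $\geq 0$. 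Iterating the exchanges reduces any $\mathbf{P}$ to $\mathbf{P}^*$, so $F_u(\mathbf{P})\geq F_u(\mathbf{P}^*)$, with strictness in every case except the residual one where all matched inliers are already correct and the only deviations are outlier-to-outlier matches.

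The pairwise potentials are the heart, and are where Proposition~\ref{prop:prop_condition} enters. I would split the edge sum of $F_{p_1}$ by the endpoint types of $(i,j)$. On an inlier-inlier edge that $\mathbf{P}$ keeps matched, pairwise consistency gives $\|\mathbf{A}_{ij}-\mathbf{B}_{\sigma(i)\sigma(j)}\|\geq\|\mathbf{A}_{ij}-\mathbf{B}_{\tau^*(i)\tau^*(j)}\|$, with equality iff $\sigma$ agrees with $\tau^*$ there; on an inlier-inlier edge that $\mathbf{P}$ leaves partly unmatched the contribution jumps to $\mathcal{E}_{ij}\|\mathbf{A}_{ij}\|^2$; and on an outlier-incident edge that $\mathbf{P}$ matches, pairwise distinguishability forces a large discrepancy. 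The hypotheses \eqref{eq:prop3_1}--\eqref{eq:prop3_2} are exactly what controls the cross terms: since inlier-inlier edges carry the largest weights $\mathcal{E}_{ij}$ and attribute norms $\|\mathbf{A}_{ij}\|$, the increase from un-matching or mis-matching an inlier edge cannot be offset by any decrease on the low-weight, small-norm outlier-incident edges that $\mathbf{P}$ matches instead. This also breaks the residual unary ties: in that configuration some inlier is unmatched, so an inlier-inlier edge loses its consistent value and strictly raises $F_{p_1}$ (or $F_{p_2}$). Assembling the bounds gives $F_{p_1}(\mathbf{P})\geq F_{p_1}(\mathbf{P}^*)$, and the identical argument with $\mathcal{G},\mathcal{G}'$ interchanged gives $F_{p_2}(\mathbf{P})\geq F_{p_2}(\mathbf{P}^*)$.

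Combining, $F(\mathbf{P})=\lambda_1F_u(\mathbf{P})+\lambda_2\bigl(F_{p_1}(\mathbf{P})+F_{p_2}(\mathbf{P})\bigr)\geq F(\mathbf{P}^*)$, and since every $\mathbf{P}\neq\mathbf{P}^*$ makes either the unary or a pairwise term strict, equality forces $\mathbf{P}=\mathbf{P}^*$. I expect the \emph{main obstacle} to be the pairwise mixed cases in the third step: rigorously bounding the situation where $\mathbf{P}$ trades correct inlier matches for outlier matches, because there the edge sums of $\mathbf{P}$ and $\mathbf{P}^*$ no longer align edge-by-edge and one must invoke the dominance hypotheses globally rather than locally, and the strictness that the non-strict distinguishability leaves unresolved in $F_u$ has to be recovered from these pairwise estimates.
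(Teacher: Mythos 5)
Your skeleton --- decompose each $\mathbf{P}\in\mathcal{P}_k$ into its matched row set $S$ and injection $\sigma$, handle $F_u$ by row/column minimality plus a counting argument using $|S|=k=|\mathscr{A}_I|$, and reduce the pairwise bound to a comparison between the inlier--inlier edges that $\mathbf{P}$ abandons and the outlier-incident edges it matches instead --- is the right case analysis, and it matches the structure of the argument the paper relegates to its supplementary material. But there is a genuine gap exactly at the step you yourself flag as the ``main obstacle'': the claim that the increase from un-matching an inlier edge ``cannot be offset'' by decreases on outlier-incident edges is asserted, never established, and it is \emph{not} a termwise consequence of \eqref{eq:prop3_1}--\eqref{eq:prop3_2}. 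Writing $\epsilon_{ij}=\|\mathbf{A}_{ij}-\mathbf{B}_{\tau^*(i)\tau^*(j)}\|$ for an abandoned inlier--inlier edge and $\delta_{oj}=\|\mathbf{A}_{oj}-\mathbf{B}_{\sigma(o)\sigma(j)}\|$ for a matched outlier-incident edge, the per-pair inequality you need is $\mathcal{E}_{ij}\bigl(\|\mathbf{A}_{ij}\|^2-\epsilon_{ij}^2\bigr)\geq\mathcal{E}_{oj}\bigl(\|\mathbf{A}_{oj}\|^2-\delta_{oj}^2\bigr)$. If you apply the dominance hypotheses first, the subtracted products order the wrong way: $\mathcal{E}_{ij}\epsilon_{ij}^2$ can exceed $\mathcal{E}_{oj}\delta_{oj}^2$ even though $\epsilon_{ij}\leq\delta_{oj}$, so ``invoking the dominance hypotheses globally'' does not by itself close the estimate. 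One must first use pairwise distinguishability to replace $\delta_{oj}$ by $\epsilon_{ij}$, as in $\mathcal{E}_{oj}(\|\mathbf{A}_{oj}\|^2-\delta_{oj}^2)\leq\mathcal{E}_{oj}(\|\mathbf{A}_{oj}\|^2-\epsilon_{ij}^2)\leq\mathcal{E}_{ij}(\|\mathbf{A}_{ij}\|^2-\epsilon_{ij}^2)$, where the second step needs both \eqref{eq:prop3_1} and \eqref{eq:prop3_2} \emph{and} the sign condition $\|\mathbf{A}_{oj}\|\geq\epsilon_{ij}$; and the whole comparison requires an explicit injection pairing each outlier-incident edge matched by $\mathbf{P}$ with a distinct abandoned inlier--inlier edge, built from the node-level pairing between the $t$ outlier-involving matches and the $t$ inliers not correctly matched. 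None of this machinery appears in your proposal; it is the heart of the proof, not a routine verification.

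Two further premises are used silently and are not consequences of the stated hypotheses. First, you assume that un-matching an inlier--inlier edge raises its term from $\mathcal{E}_{ij}\epsilon_{ij}^2$ to $\mathcal{E}_{ij}\|\mathbf{A}_{ij}\|^2$, i.e.\ $\|\mathbf{A}_{ij}\|\geq\epsilon_{ij}$; this follows from pairwise consistency only if the minimum there is read as ranging over pairs $(a,b)$ whose attribute vanishes (e.g.\ $a=b$ with $\mathbf{B}_{aa}=\mathbf{0}$), otherwise it is an additional assumption that must be stated. Second, your plan to ``recover'' the strictness left open by $F_u$ from the pairwise estimates cannot succeed from the non-strict hypotheses alone: if an outlier pair $(o,o')$ sits exactly at the distinguishability boundary with $\mathcal{E}_{oj}=\mathcal{E}_{i_0j}$, $\|\mathbf{A}_{oj}\|=\|\mathbf{A}_{i_0j}\|$, and $\delta_{oj}=\epsilon_{i_0j}$ for all $j$ (and symmetrically in $\mathcal{G}'$), then swapping the worst-matching inlier $i_0$ for $o$ attains $F(\mathbf{P})=F(\mathbf{P}^*)$ with $\mathbf{P}\neq\mathbf{P}^*$; the uniqueness clauses in the consistency propositions forbid ties only in inlier rows and columns, not at outlier pairs. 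So the ``only if'' half of \eqref{eq:rational} requires a strict inequality somewhere at outlier-incident indices (strict distinguishability, or strict versions of \eqref{eq:prop3_1}--\eqref{eq:prop3_2}), and a complete proof must identify exactly which strict inequality it draws on.
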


\begin{proof}
 Due to the over-length of the entire proof, we give the details in our supplementary materials, which also demonstrate the intrinsic differences between GM on simple graphs and on complicated graphs.
\end{proof}

Note that, the Eq.~\eqref{eq:prop3_1} and~\eqref{eq:prop3_2} tell us how to calculate proper $\{\mathcal{E}_{ij}\}$, $\{\mathbf{A}_{ij}\}$ (or $\{\mathcal{E}'_{ab}\}$, $\{\mathbf{B}_{ab}\}$): we should compute $\mathcal{E}_{ij}$ and $\mathbf{A}_{ij}$ to measure the similarities between the two end-nodes in edge $(i,j)$ such that edges linked by two inliers have higher similarities than the edges linked by inlier-outlier or outlier-outlier. It will be followed and validated in the experiments section Sec.~\ref{sec:exper}.

\section{Outlier-robust graph matching algorithm}\label{sec:numerical}

In this section, we propose an efficient algorithm to solve Eq.~\eqref{eq:obj} and then design an outlier identification approach.

\subsection{Optimization algorithm}\label{sec:optimiza}
Our optimization algorithm is based on the Frank-Wolfe method~\cite{[2015-Simon-nips],[2016-Lafond]}, which is widely used for convex or non-convex optimization and achieve at least sub-linear convergence rate. Since it is a continuous line-search-based method, we should relax the discrete ${\mathcal{P}}_k$ into the continuous $\hat{\mathcal{P}}_k$ by relaxing $\mathbf{P}_{ia}\in\{0,1\}$ into $\mathbf{P}_{ia}\in[0,1]$. Given $F(\mathbf{P})$ is differentiable and $\hat{\mathcal{P}}_k$ is convex, the Frank-Wolfe method iterates the following steps till it converges:
{\begin{align}
	\tilde{\mathbf{P}}^{(t+1)}&\in \mathop{\text{argmin}}\limits_{{\mathbf{P}}\in \hat{\mathcal{P}}_k}\triangleq\langle\nabla F(\mathbf{P}^{(t)}),\mathbf{P}\rangle,\label{eq:fw1}\\
	\mathbf{P}^{(t+1)}&=\mathbf{P}^{(t)} + \alpha^{(t)}(\tilde{\mathbf{P}}^{(t+1)}-\mathbf{P}^{(t)}),\label{eq:fw2}
\end{align}}where $\nabla F(\mathbf{P}^{(t)})$ is the gradient of $F(\mathbf{P})$ at $\mathbf{P}^{(t)}$ and $\alpha^{(t)}$ is the step size obtained by exact or inexact line search~\cite{[1965-Goldstein]}. 

{\bf Gradient computation.} The gradient $\nabla F(\mathbf{P})$ can be efficiently calculated by matrix operations as follows,
\begin{align}
	&\mathbf{W}_1\triangleq 4||\mathbf{PBP}^{\text{T}}-\mathbf{A}||\otimes \text{sign}(\mathbf{PBP}^{\text{T}}-\mathbf{A})\otimes\mathcal{E},\\
	&\mathbf{W}_2\triangleq 4||\mathbf{P}^{\text{T}}\mathbf{AP}-\mathbf{B}||\otimes \text{sign}(\mathbf{P}^{\text{T}}\mathbf{AP}-\mathbf{B})\otimes\mathcal{E}',\\
	&\nabla F(\mathbf{P})=\lambda_1 \mathbf{D} + \lambda_2 [\mathbf{W}_1\mathbf{PB}^{\text{T}}+\mathbf{AP}\mathbf{W}_2^{\text{T}}],
\end{align}where $\otimes$ is the pointwise multiplication and $\text{sign}(\cdot)$ is the sign function.

{\bf The k-cardinality LAP.} Eq.~\eqref{eq:fw1} plays a key role of the optimization. It is a linear programming (LP) problem that can be solved by LP algorithms like interior point method~\cite{[1994-Nesterov-siam]}. However, such methods have costly time complexity $O(m^3n^3/\text{ln(mn)})$~\cite{[1999-Anstreicher-siam]}. Fortunately, one can prove that $\tilde{\mathbf{P}}^{(t+1)}$ is an extreme point~\cite{[1995-extreme]} of $\hat{\mathcal{P}}_k$, thus, $\tilde{\mathbf{P}}^{(t+1)}\in\mathcal{P}_k$. Therefore, Eq.~\eqref{eq:fw1} boils down to a k-cardinality linear assignment problem (kLAP)~\cite{[1997-DellAmico]}. We can adopt the approach~\cite{[2004-Volgenant]} by which the kLAP is transformed into a standard LAP that can be efficiently solved by the Hungarian~\cite{[2010-Kuhn]} or LAPJV~\cite{[1987-Jonker]} algorithm with much less time complexity $O(n^3)$.

{\bf Regularization.} Someone may doubt that the explicit equation constraint $\mathbf{1}^{\text{T}}\mathbf{P}\mathbf{1}=k$ in the feasible filed is too strong. We can replace it with an implicit regularization term $(\mathbf{1}^{\text{T}}\mathbf{P}\mathbf{1}-k)^2$ and obtain a new objective function as
\begin{equation}\label{eq:obj_reg}
\min_{\mathbf{P}} F_{r}(\mathbf{P},k)= F(\mathbf{P})+\lambda_0(\mathbf{1}^{\text{T}}\mathbf{P}\mathbf{1}-k)^2.
\end{equation}
We set $\lambda_0=1$ is this paper. To solve Eq.~\eqref{eq:obj_reg}, we can adopt the alternating optimization strategy: alternatively find the minimizer $\hat{\mathbf{P}}$ of Eq.~\eqref{eq:obj_reg} by Frank-Wolfe method with fixed $k$ and then update $k=\mathbf{1}^{\text{T}}\hat{\mathbf{P}}\mathbf{1}$. Note that, in this case, Eq.~\eqref{eq:fw1} is solved by LP algorithms (interior point method in this paper) rather than the kLAP solvers since the constraint $\mathbf{1}^{\text{T}}\mathbf{P}\mathbf{1}=k$ dose not hold during solving Eq.~\eqref{eq:fw1}.

{\bf Computational complexity.} Since we do not use the affinity matrix $\mathbf{K}$, the space complexity is only $O(n^2)$. In optimization, each iteration takes time complexity $O(n^3)$ to solve the k-LAP or $O(m^3n^3/\text{ln(mn)})$ to solve the LP, and $O(m^2n+mn^2)$ to compute the values and gradients of objective function. We are advised to adopt the kLAP-based approach based on the experimental analyses in Sec.~\ref{sec:exper}.

\subsection{Outlier identification and removal}

After minimizing $F(\mathbf{P})$ or $F_r(\mathbf{P},k)$, we obtain an optimal correspondence matrix $\hat{\mathbf{P}}$ that has two advantages beneficial to outlier identification: (1) $\hat{\mathbf{P}}$ optimally preserves the structural alignments between the two matched graphs. (2) The nearly zero-valued vectors $\hat{\mathbf{P}}_{i,:}\approx \mathbf{0}^{\text{T}}$ or $\hat{\mathbf{P}}_{:,a}\approx \mathbf{0}$ indicate that the node $V_i\in\mathcal{G}$ or $V_a'\in\mathcal{G}'$ can be identified as outliers, as an example shown in Fig.~\ref{fig:out_ide} (a).

An outlier removal approach is proposed based on this outlier identification criterion. Given $\hat{\mathbf{P}}$, we first calculate two vectors as $\hat{\mathbf{P}}\mathbf{1}=\{{\hat{\mathbf{P}}_{i,:}\mathbf{1}}\}_{i=1}^m$ and $\mathbf{1}^{\text{T}}\hat{\mathbf{P}}=\{\mathbf{1}^{\text{T}}\hat{\mathbf{P}}_{:,a}\}_{a=1}^n$, whose components with smaller values are more likely to be outliers. Then, $\hat{\mathbf{P}}\mathbf{1}$ and $\mathbf{1}^{\text{T}}\hat{\mathbf{P}}$ form the 2-dimensional coordinates of coupled nodes $\{(V_i, V_a')\}_{i,a}$ in the joint probability space, where the inliers and outliers can be significantly separated and clustered ({\eg}, by k-means) into two classes, see an example in Fig.~\ref{fig:out_ide} (b). Assume that $m',n'$ nodes of the two graphs are clustered as inliers by the clustering step, if $m'<k$ or $n'<k$, we pick out $k-m'$ or $k-n'$ nodes left with higher component values and put them back into inliers. If $m'>k$ or $n'>k$, the nodes with component values less than $0.5$ will also be chosen as outliers. We iteratively execute this outlier removal procedure and then refine the inliers of two graphs till the enumerations of inliers keep unchanged. At last, the optimal solution solved w.r.t the refined graphs is our final matching result.

\begin{figure}
	\centering
	{\includegraphics[width=0.48\linewidth]{./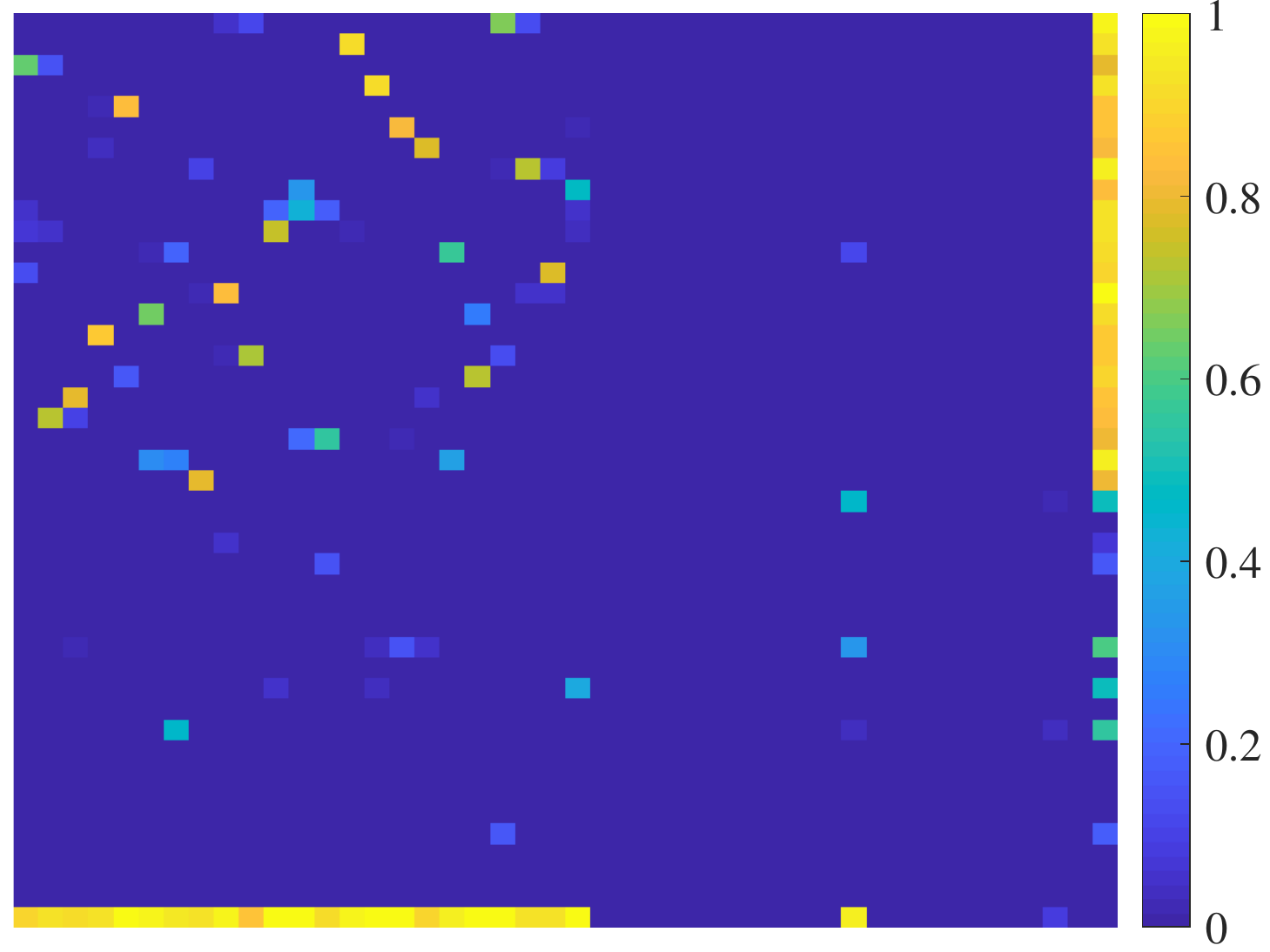}}
	\hspace{2mm}{\includegraphics[width=0.48\linewidth]{./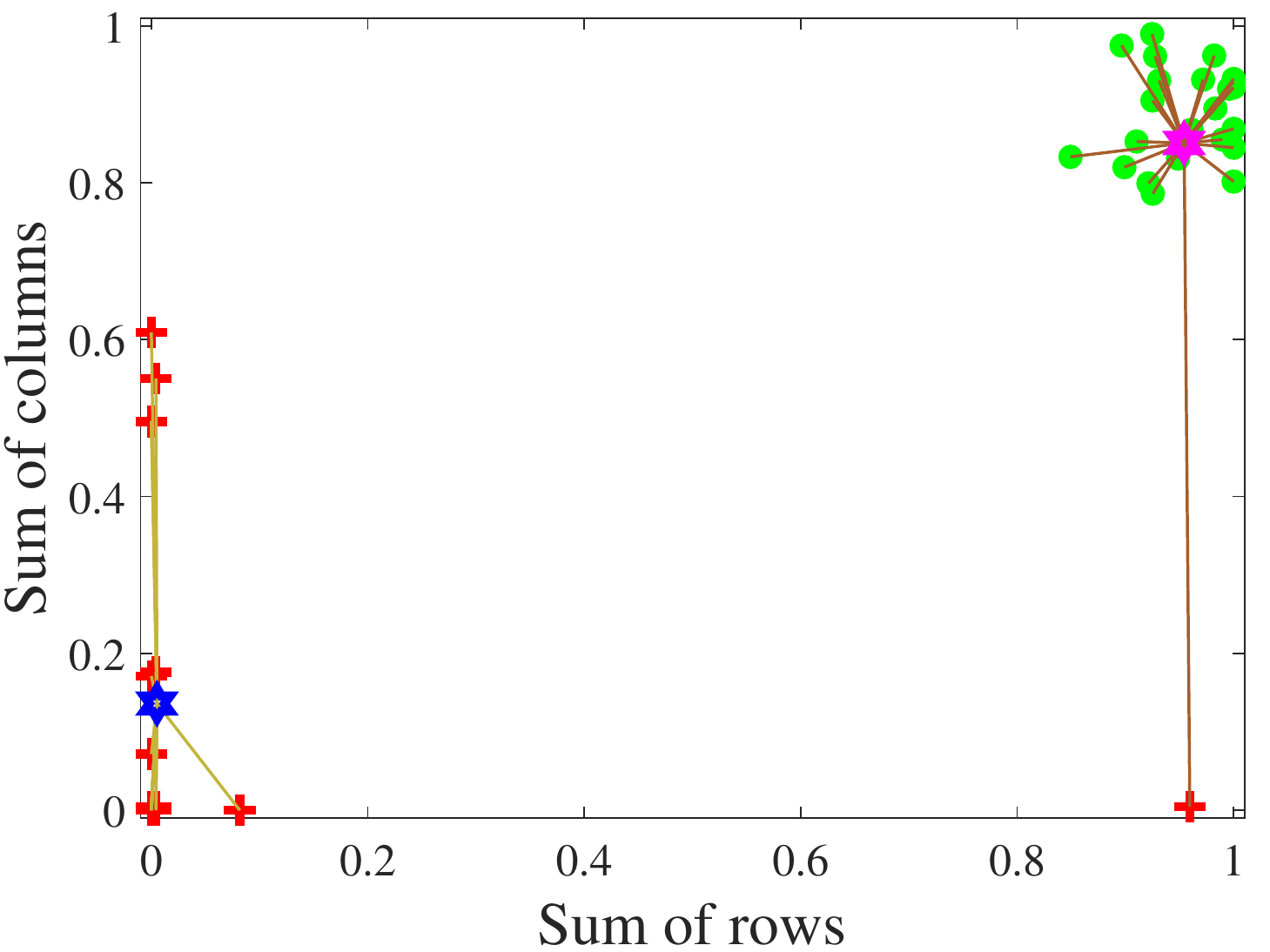}}
	\vspace{-5mm}
	\caption{An example of the outlier identification and removal w.r.t. Fig.~\ref{fig:fig1_com} (b). {\bf Left}: the last row and column show the sums of column and row vectors of the optimal correspondence matrix $\hat{\mathbf{P}}$. {\bf Right}: the inliers (green dots) and outliers (red signs) can be significantly separated and clustered into two classes. Note that, there is one outlier in each graph clustered as inlier due to its high similarity with the other inliers (see the matching result in Fig.~\ref{fig:fig1_com} (b)).}
	\label{fig:out_ide}
\end{figure}


\section{Experimental analysis}\label{sec:exper}

In this section, we evaluate and compare our methods (denoted as ZAC w.r.t. Eq.~\eqref{eq:obj} and ZACR w.r.t Eq.~\eqref{eq:obj_reg}) with state-of-the-art graph matching methods including GA~\cite{[1996-Gold]}, RRWM~\cite{[2010-Cho-eccv]}, MPM~\cite{[2014-Cho-cvpr]}, FGMD~\cite{[2016-Zhou-pami]}, BPFG~\cite{[2018-Wang]} and FRGM~\cite{[2019-FRGM]} on widely used complicated datasets in terms of matching accuracy and time consumption. The codes of the compared methods are downloaded from the author’s websites. Our code is available at \url{https://github.com/wangfudong/ZAC_GM}. For better evaluation of graph matching in the presence of outliers, we compute the commonly used indicators called {\itshape recall $=\frac{\#\{\text{correct matching}\}}{\#\{\text{groundtruth matching}\}}$}, {\itshape precision $=\frac{\#\{\text{correct matching}\}}{\#\{\text{total matching}\}}$} and {\itshape F-measure$=\text{\small{2}}\frac{\text{recall}\cdot\text{precision} }{\text{recall}+\text{precision}}$}. 

\subsection{Results on PASCAL dataset}

We first conducted experiments on graphs in PASCAL dataset~\cite{[2012-Leordeanu-ijcv]}, which consists of 30 and 20 pairs of car and motorbike images ({\eg}, Fig.~\ref{fig:fig1_com}), respectively. Each pair contains both inliers with known correspondence and randomly marked dozens of outliers. To generate graphs with outliers, we randomly selected 0, 4, ..., 20 outliers to both graphs, respectively. To generate the edges, our methods and FRGM applied complete graphs, while the others connected edges by Delaunay Triangulation, on which they achieved better performance than on complete graphs.

Similar with~\cite{[2016-Zhou-pami],[2019-FRGM]}, we set $\mathbf{K}_{ia;ia}=\text{exp}(-d(\mathbf{v}_i-\mathbf{v}_a'))$,
and $\mathbf{K}_{ia;jb}=\text{exp}(-\frac{1}{2}({|\mathbf{E}_{ij}-\mathbf{E}'_{ab}|}+{|\Theta_{ij}-\Theta'_{ab}|}))$, where $\mathbf{v}_i,\mathbf{v}_a'$ were shape context~\cite{[2002-Belongie-pami]}, $d(\mathbf{v}_i-\mathbf{v}_a')$ was the cost computed as $\chi^2$ test statistic~\cite{[2002-Belongie-pami]}, $\mathbf{E}_{ij},\mathbf{E}_{ab}'$ were distance matrices between nodes, $\Theta_{ij}, \Theta'_{ab}$ were the angles between the edges and the horizontal line. 
For our methods, we calculated $\mathbf{D}_{ia}=d(\mathbf{v}_i-\mathbf{v}_a')$ to measure the node dissimilarity. For the weighted adjacency matrices $\mathcal{E},\mathcal{E}'$ and edge attributes $\mathbf{A}_{ij},\mathbf{B}_{ab}$, in order to honor the proposition~\ref{prop:prop_condition}, we set $\mathcal{E}=1\oslash\mathbf{E},\mathcal{E}'=1\oslash\mathbf{E}'$ and $\mathbf{A}=\exp(-\mathbf{E}^2/\sigma_1^2),\mathbf{B}=\exp(-\mathbf{E}'^2/\sigma_2^2)$ with $\sigma_1,\sigma_2$ were the standard deviations of $\mathbf{E},\mathbf{E}'$. The weights in Eq.~\eqref{eq:obj} were $\lambda_1=\lambda_2=1$. 

\begin{figure}
	\centering
	\subfigure[Car]
	{\includegraphics[width=0.46\linewidth]{./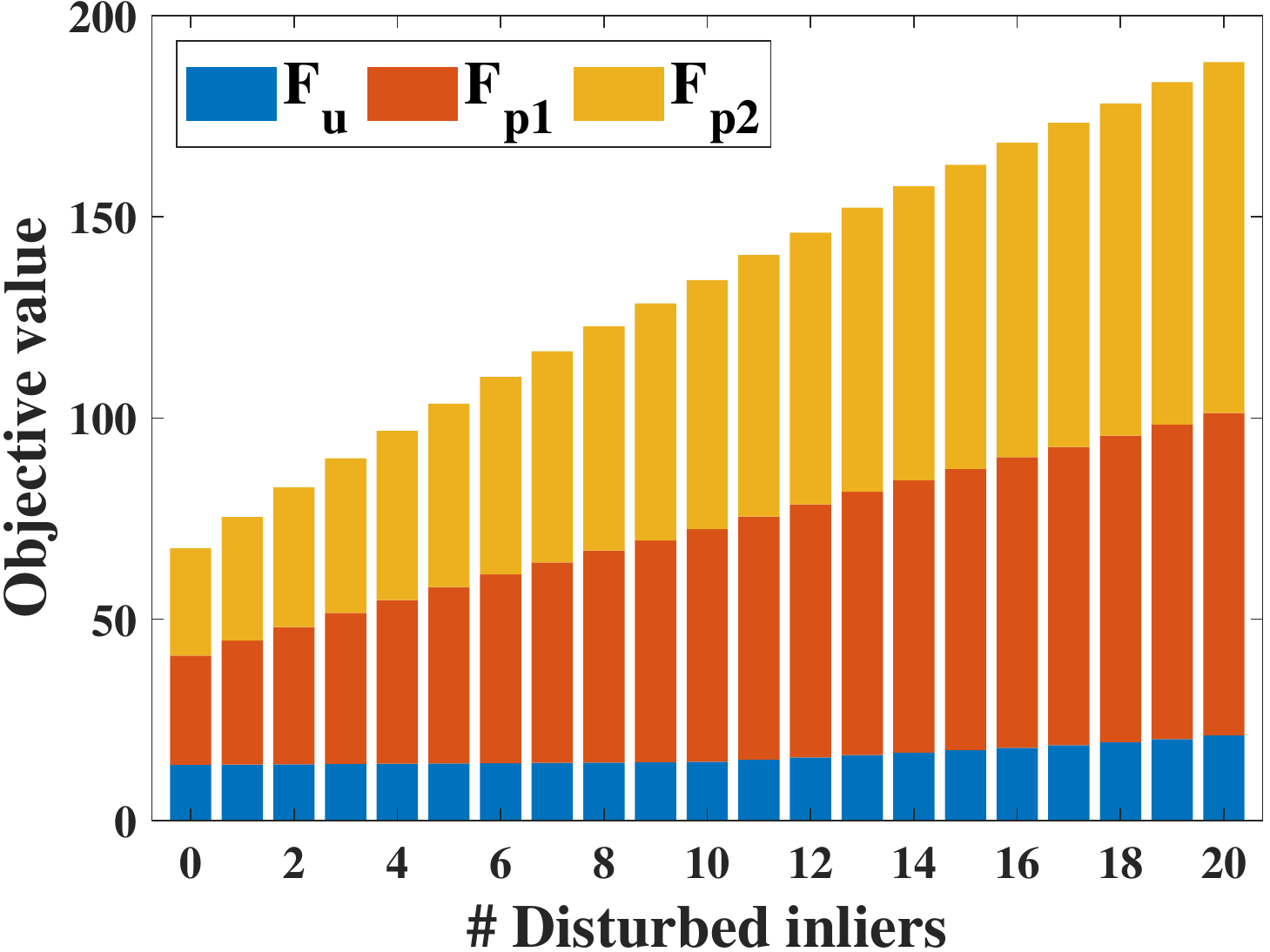}}
	\hspace{3mm}\subfigure[Motorbike]
	{\includegraphics[width=0.46\linewidth]{./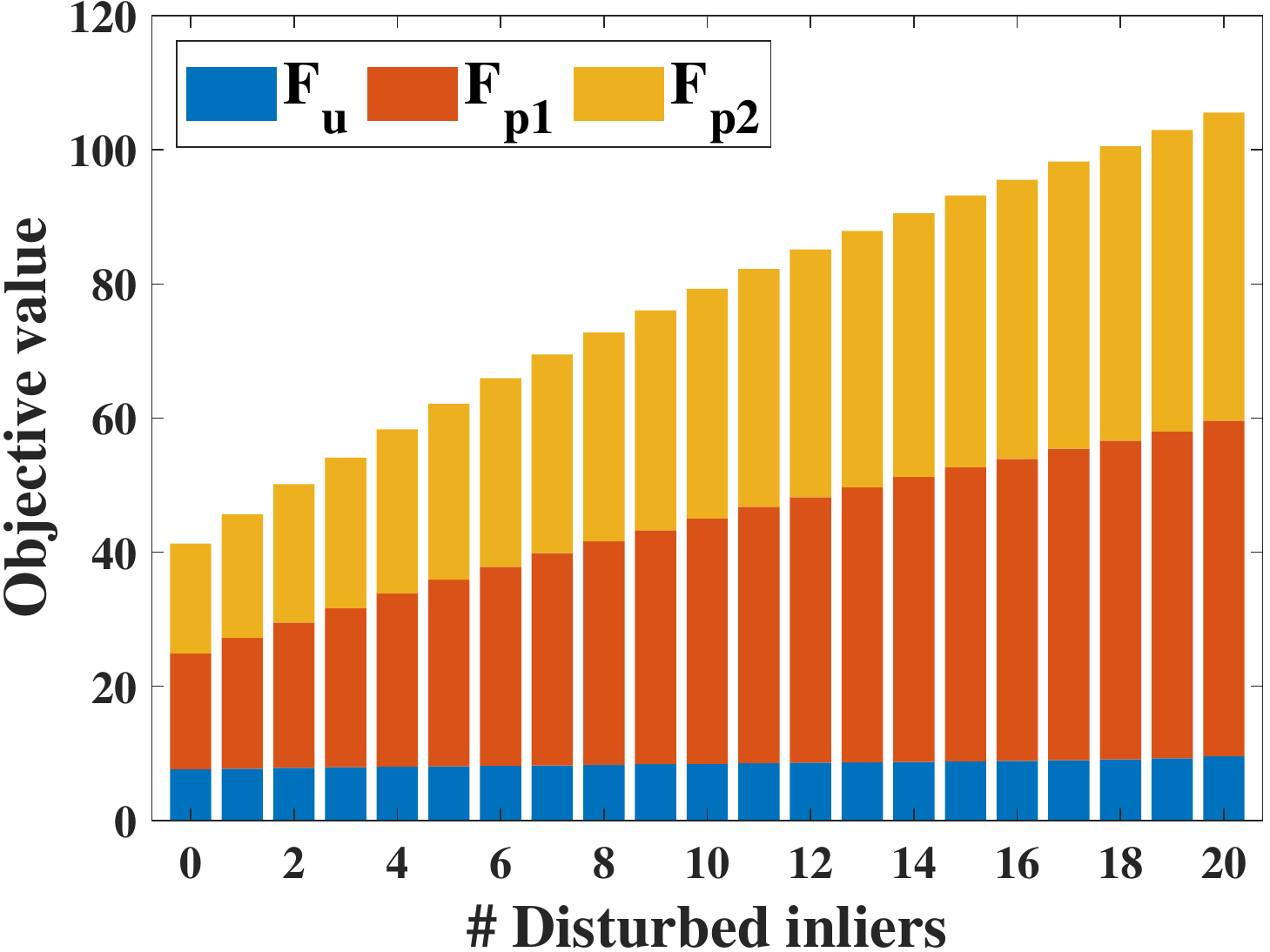}}
	\vspace{-4mm}
	\caption{Statistical verification of the minimum values of our objective function Eq.~\eqref{eq:obj}.}
	\label{fig:pascal_obj_val}
\end{figure}

First, we presented a statistical verification for proposition~\ref{prop:prop_condition}. For each graph pair with outliers, we randomly disturbed the ideal correspondences between inliers by forcing $0,1,...,20$ inliers to be incorrectly matched. Then, we applied our optimization algorithm to minimize the objective function Eq.~\eqref{eq:obj} under the mismatching constraints. We reported the series of obtained minimum values of objective function in Fig.~\ref{fig:pascal_obj_val}. It shows that, with increasing number of disturbed matchings of inliers, the minimum values of objective function become higher. Only with no mismatchings ({\ie}, the ideal ground-truth ${\mathbf{P}}^*$), the objective function achieves the lower limit of the series of minimum values. Namely, the proposition~\ref{prop:prop_condition} can be guaranteed with our settings and optimization algorithm in practical cases.

\begin{figure}
	\centering
	{\includegraphics[width=0.7\linewidth]{./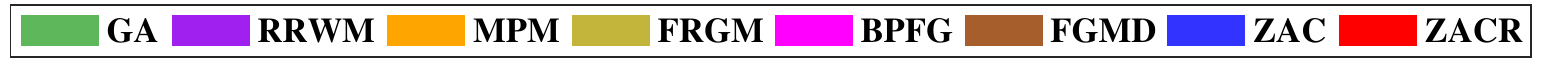}}
	
	\subfigure[Car]
	{\includegraphics[width=0.48\linewidth]{./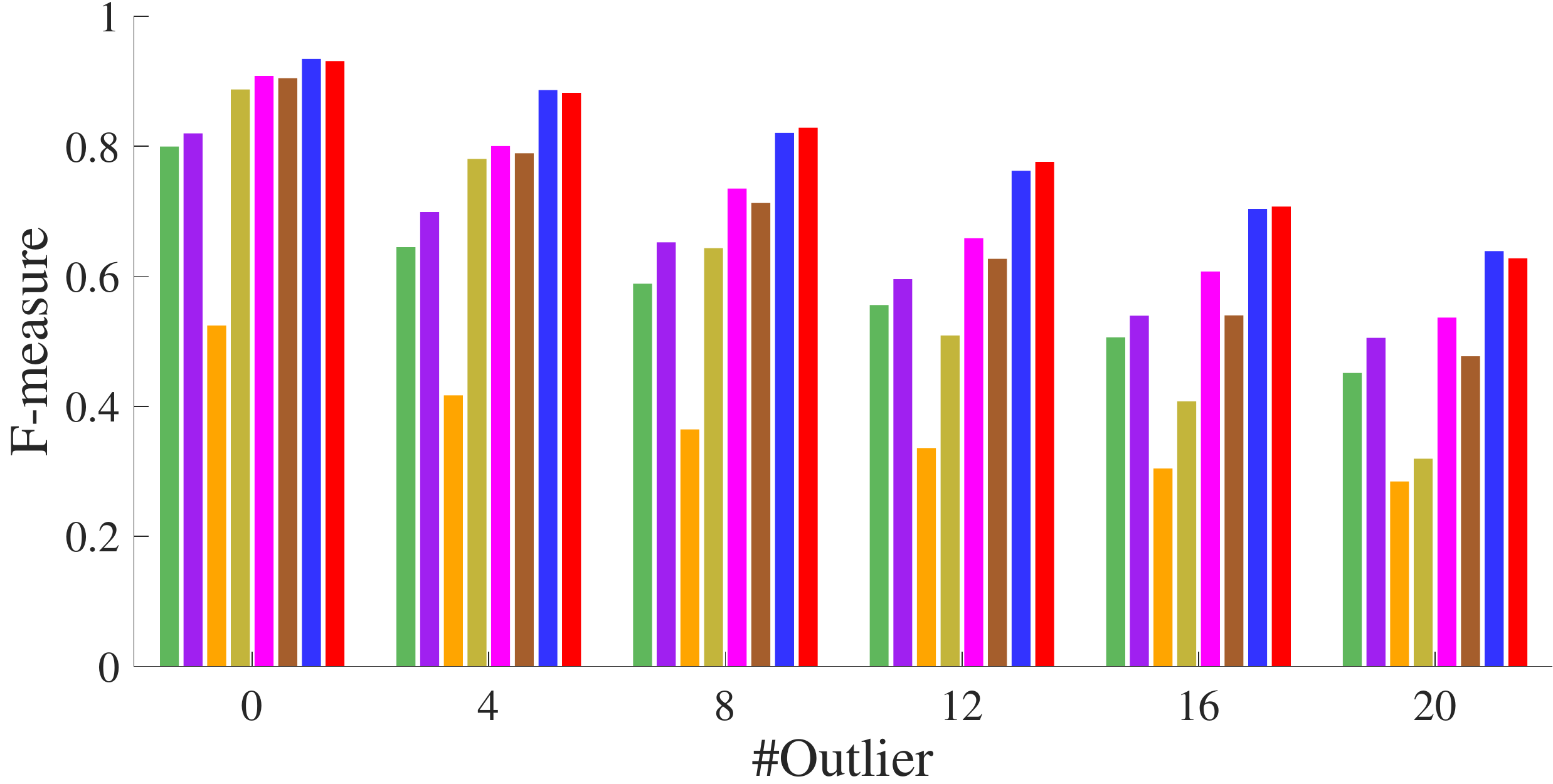}}
	\hspace{2mm}\subfigure[Motorbike]
	{\includegraphics[width=0.48\linewidth]{./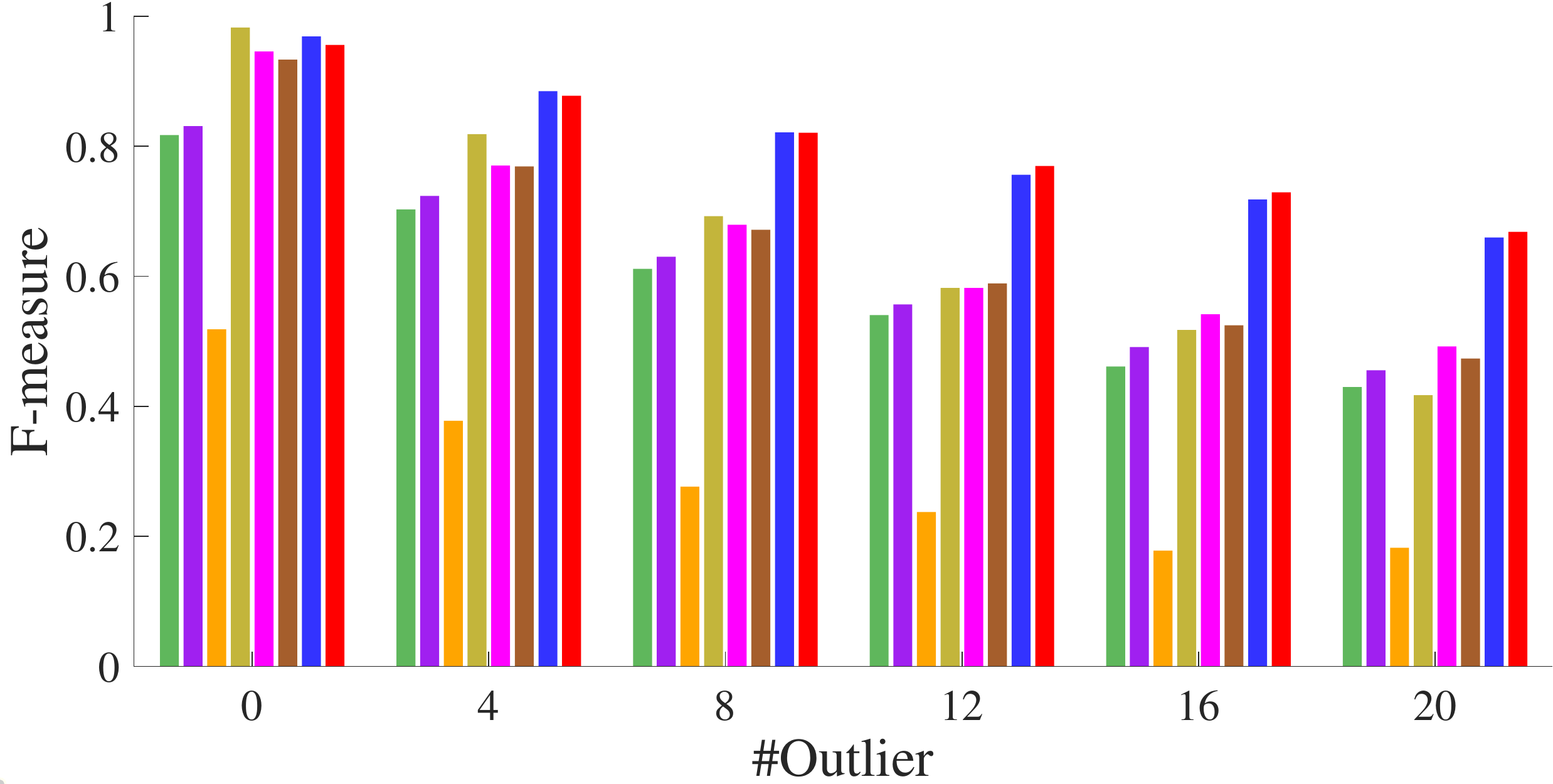}} 
	\vspace{-4mm}
	\caption{Average F-measure (\%) w.r.t. number of outliers.}
	\label{fig:pascal_fm_time}
\end{figure}

\begin{figure}
	\centering
	{\includegraphics[width=0.8\linewidth]{./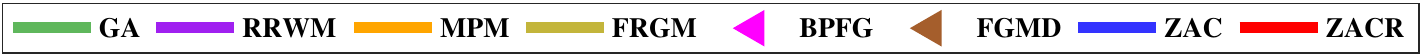}}
	\subfigure[Car]
	{\includegraphics[width=0.48\linewidth]{./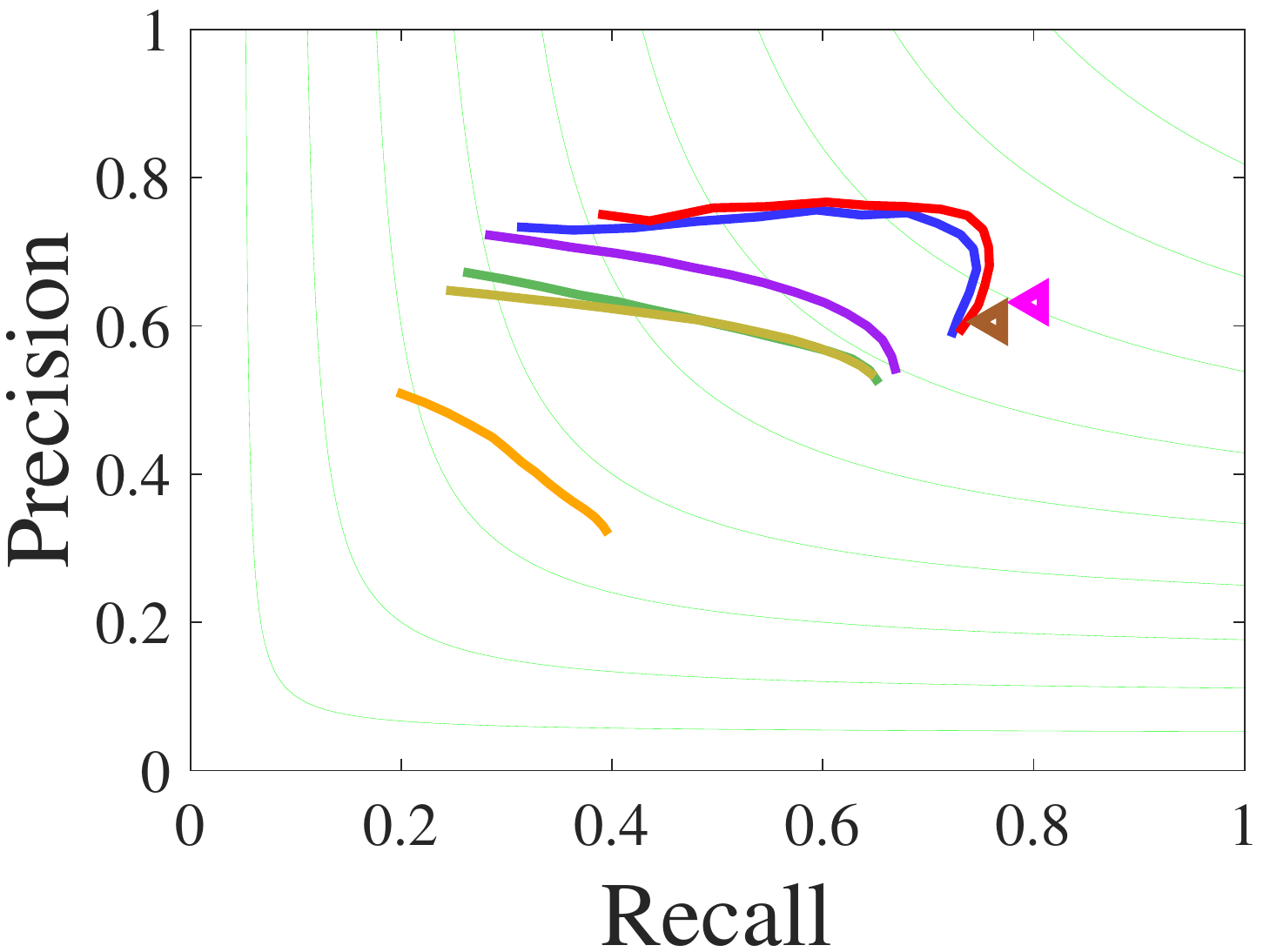}}
	\hspace{2mm}\subfigure[Motorbike]
	{\includegraphics[width=0.48\linewidth]{./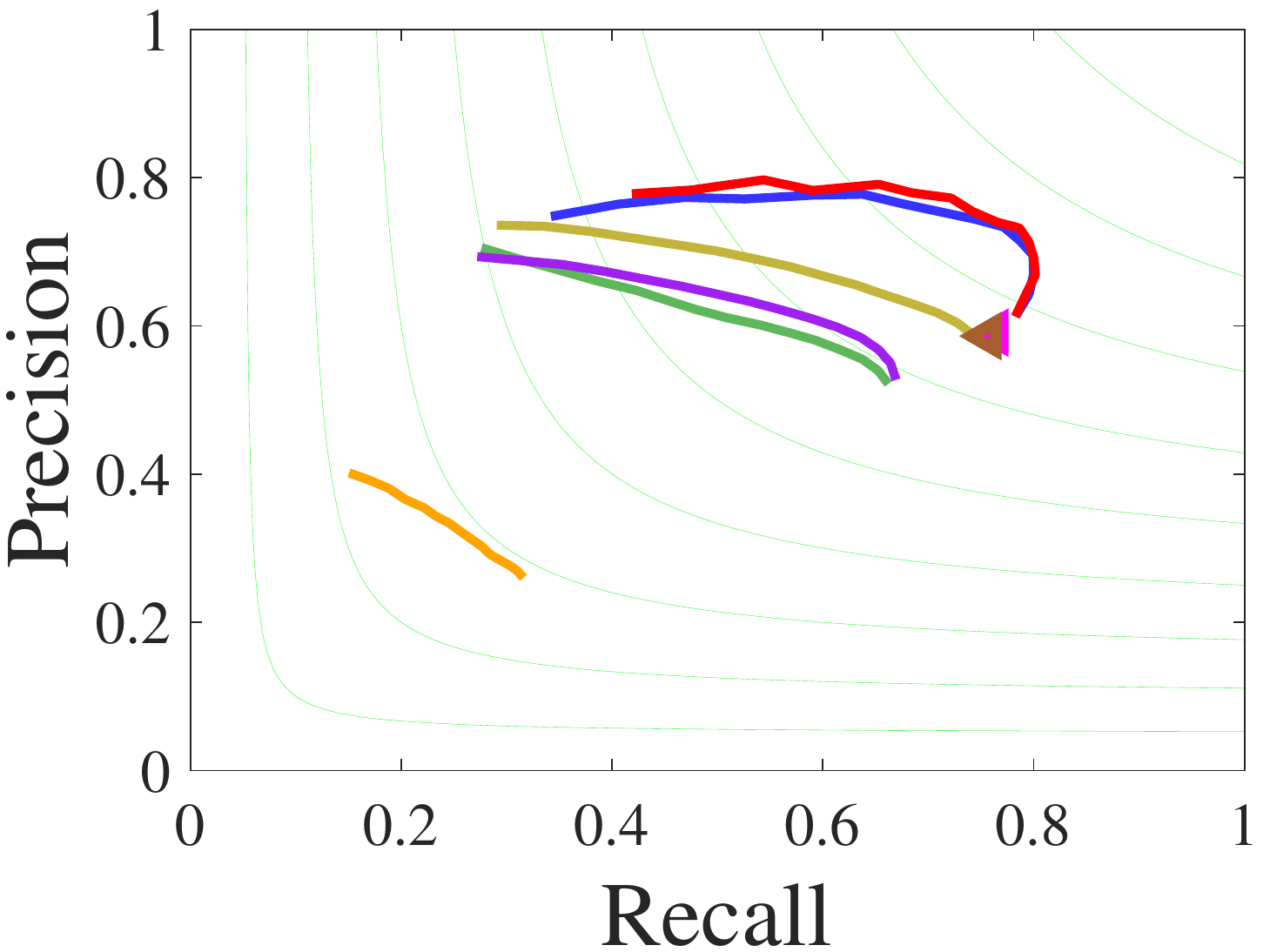}}	
	\vspace{-4mm}
	\caption{The average recall (\%) and precision (\%) w.r.t varying $ratio=0.3,0.35,...,1$ on PASCAL dataset.}
	\label{fig:pascal_pr}
\end{figure}

\begin{table}[!htb]
	\centering
	\scriptsize
	\begin{tabular}{m{0pt}p{25pt}|cccccc}
		\toprule[0.75pt]
		{\diagbox[width=15mm,trim=l]{Methods}{\#Outliers}}&& 0 & 4 & 8  & 12  & 16 & 20 \\
		\hline
		\rule{0pt}{9pt}&\hspace{-1.6mm}{GA~\cite{[1996-Gold]}}
		&0.31  &0.80   &1.21  &1.74  &2.29  &2.78 \\
		\rule{0pt}{9pt}&\hspace{-3mm}{RRWM~\cite{[2010-Cho-eccv]}}
		&0.04  &0.07   &0.12  &0.18  &0.24  &0.31\\
		\rule{0pt}{9pt}&\hspace{-2mm}{MPM~\cite{[2014-Cho-cvpr]}}
		&0.35  &0.61   &0.94  &1.40  &2.06  &3.05\\
		\rule{0pt}{9pt}&\hspace{-3mm}{FRGM~\cite{[2019-FRGM]}}
		&0.44  &0.61   &0.78  &0.96  &1.14  &1.36\\
		\rule{0pt}{9pt}&\hspace{-3mm}{BPFG~\cite{[2018-Wang]}}
		&1.07  &23.84  &37.79 &61.04 &83.41 &122.59 \\
		\rule{0pt}{9pt}&\hspace{-3mm}{FGMD~\cite{[2016-Zhou-pami]}}
		&0.68  &10.01  &12.67 &15.44 &19.47 &24.21 \\
		\hline
		\rule{0pt}{9pt}&\hspace{-0.1mm} {\bf ZAC}
		&0.18  &0.25   &0.32  &0.39  &0.47  &0.56\\
		\rule{0pt}{9pt}&{\hspace{-0.6mm} \bf ZACR}
		&0.53  &0.75   &0.89  &1.05  &1.20  &1.36\\
		\bottomrule[0.75pt]
	\end{tabular}
	\vspace{-2mm}
	\caption{Average running time (s) w.r.t. number of outliers.}
	\label{tab:pascal_time}
\end{table}

Next, we compared all the methods in terms of matching accuracy and time consumption. For overall comparisons, we set a series of numbers $k=\lfloor ratio\cdot\min\{m,n\}\rfloor$ ($ratio=0.3,0.35,...,1$ such that $k\geq 5$ since $m,n\in[15,75]$) in feasible fields $\hat{\mathcal{P}}_k$ for our method. And then, we also ran the compared methods with their soft-assignment matrix and evaluated their matching accuracy with the top $k$ matchings. Note that, since the methods FGMD~\cite{[2016-Zhou-pami]} and BPFG~\cite{[2018-Wang]} only obtain binary correspondences, we can only compute their matching accuracy with top $k=1\cdot\min\{m,n\}$ matchings.

Fig.~\ref{fig:pascal_fm_time} shows the highest average F-measure of all methods w.r.t the numbers of outliers. We can see that our methods ZAC and ZACR are more robust to outliers.  
Particularly, as shown in Fig.~\ref{fig:pascal_pr}, with a wide range of $ratio$, our methods achieve much higher precision, which means that the proposed outlier identification and removal approach can efficiently reduce incorrect or redundant matchings. Tab.~\ref{tab:pascal_time} reports the average time consumption, our methods take acceptable and intermediate time. Since the regularization term in Eq.~\eqref{eq:obj_reg} is more flexible than the equation constraint $\mathbf{1}^{\text{T}}\mathbf{P1}=k$, ZACR has a little higher accuracy than ZAC. However, as mentioned in Sec.~\ref{sec:optimiza}, since ZAC solves kLAP while ZACR uses LP solver, ZAC runs much faster than ZACR. Overall, ZAC achieves better trade-off between matching accuracy and time consumption than ZACR.

\begin{figure*}[!htb]
	\centering
	{\includegraphics[width=0.6\linewidth]{./image/pascal_legend_pr.pdf}}
	
	\vspace{-1mm}\subfigure[Pair 1-2]
	{\includegraphics[width=0.195\linewidth]{./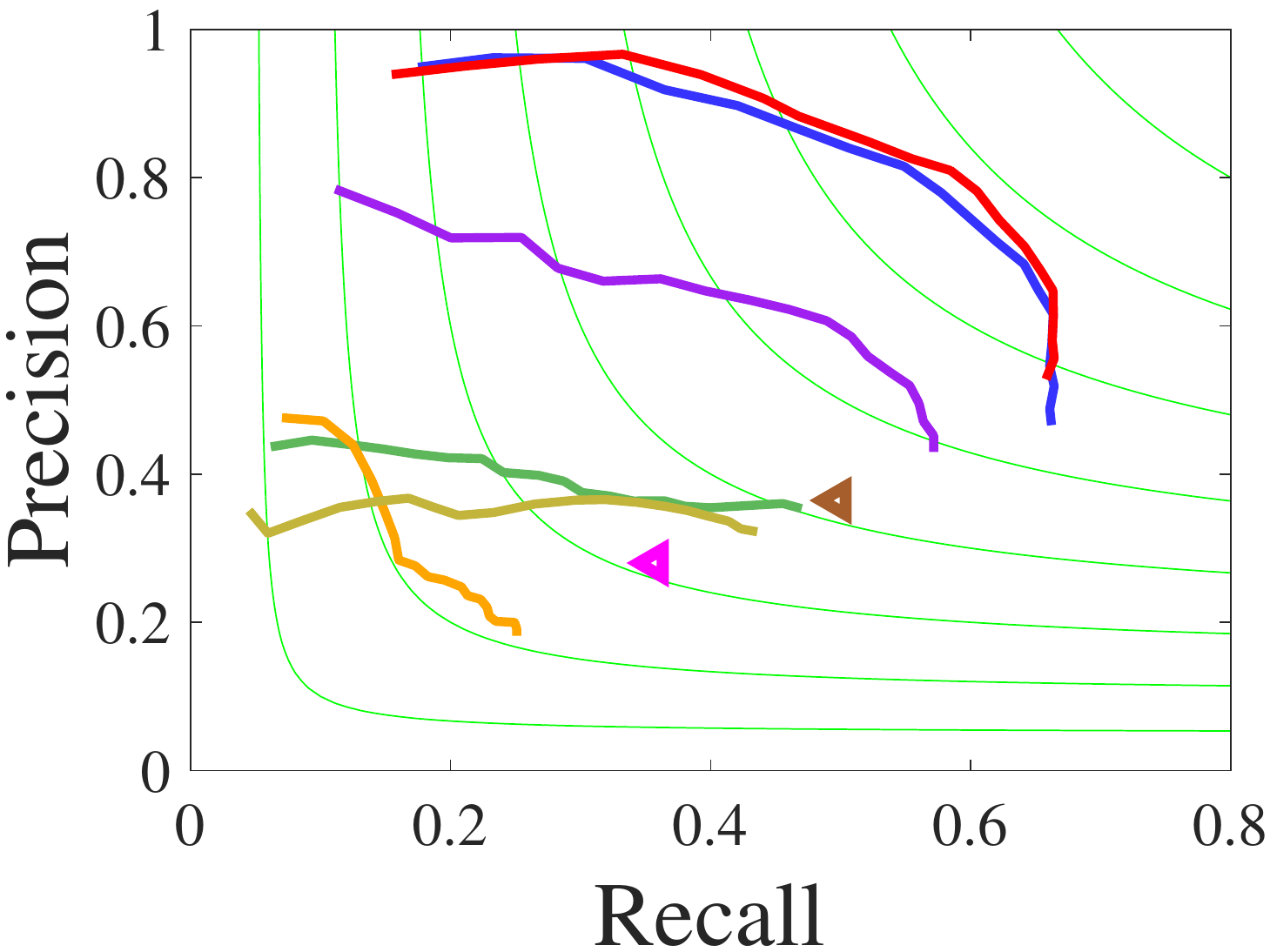}}
	\subfigure[Pair 1-3]
	{\includegraphics[width=0.195\linewidth]{./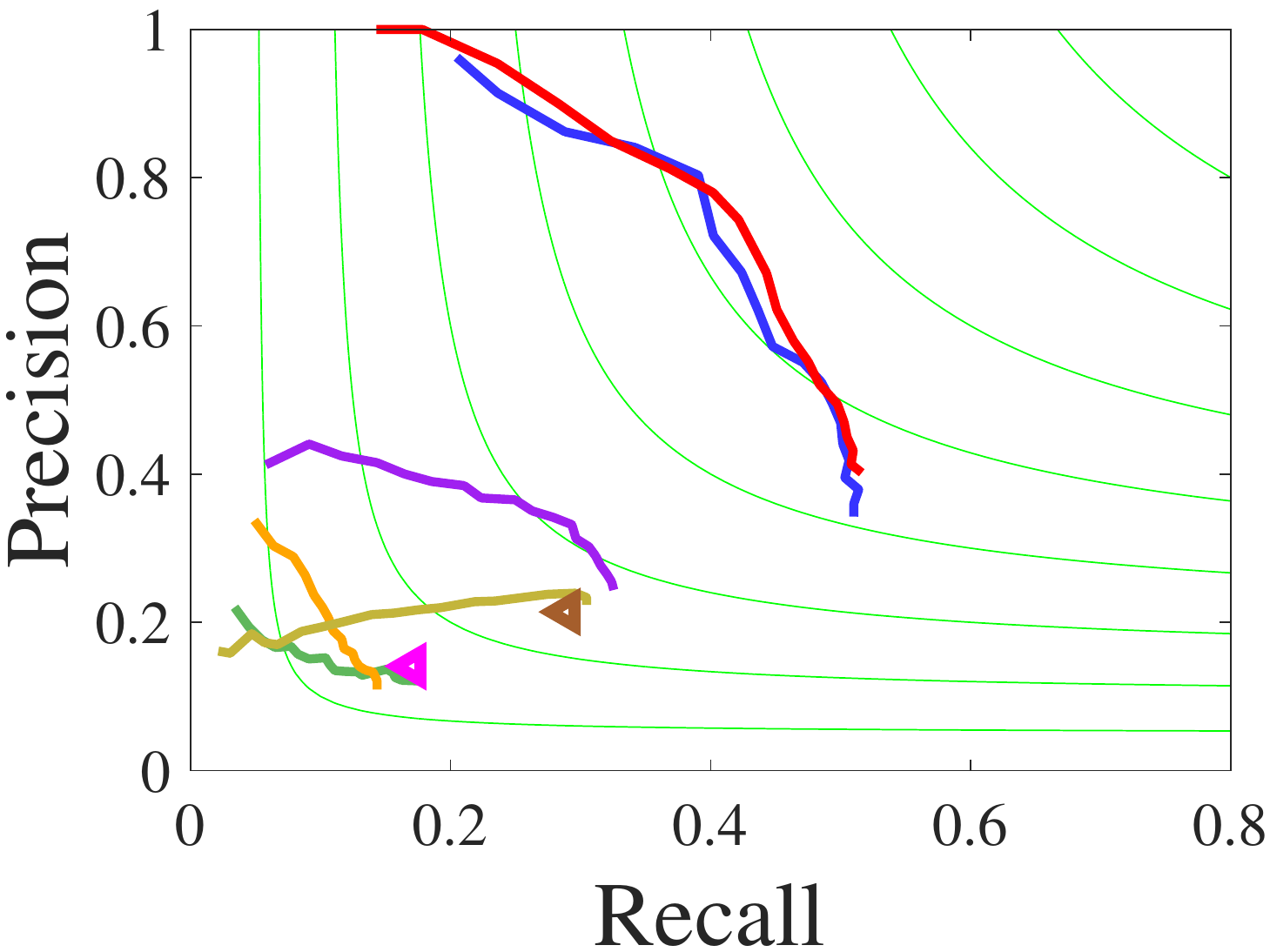}} 
	\subfigure[Pair 1-4]
	{\includegraphics[width=0.195\linewidth]{./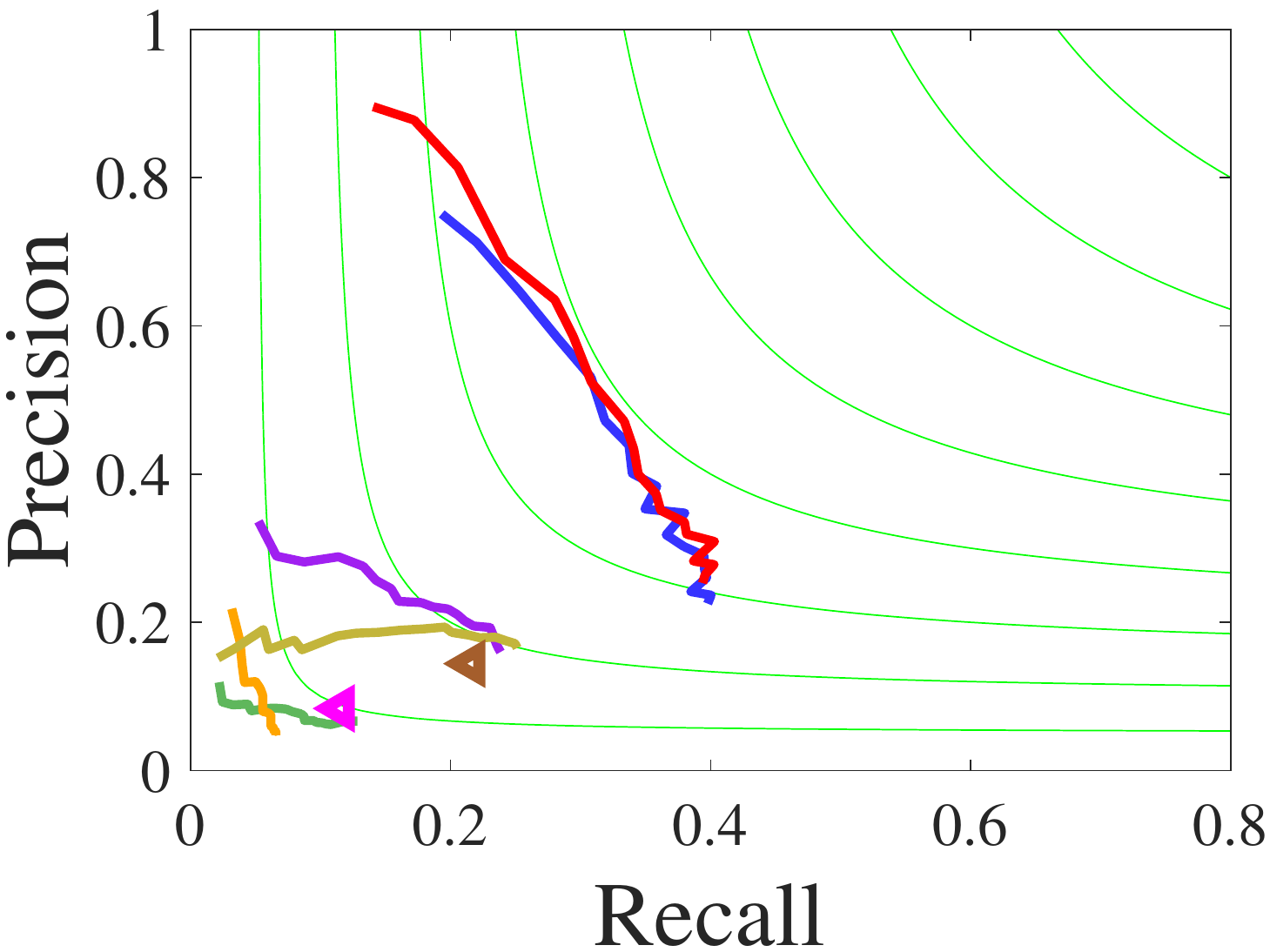}}
	\subfigure[Pair 1-5]
	{\includegraphics[width=0.195\linewidth]{./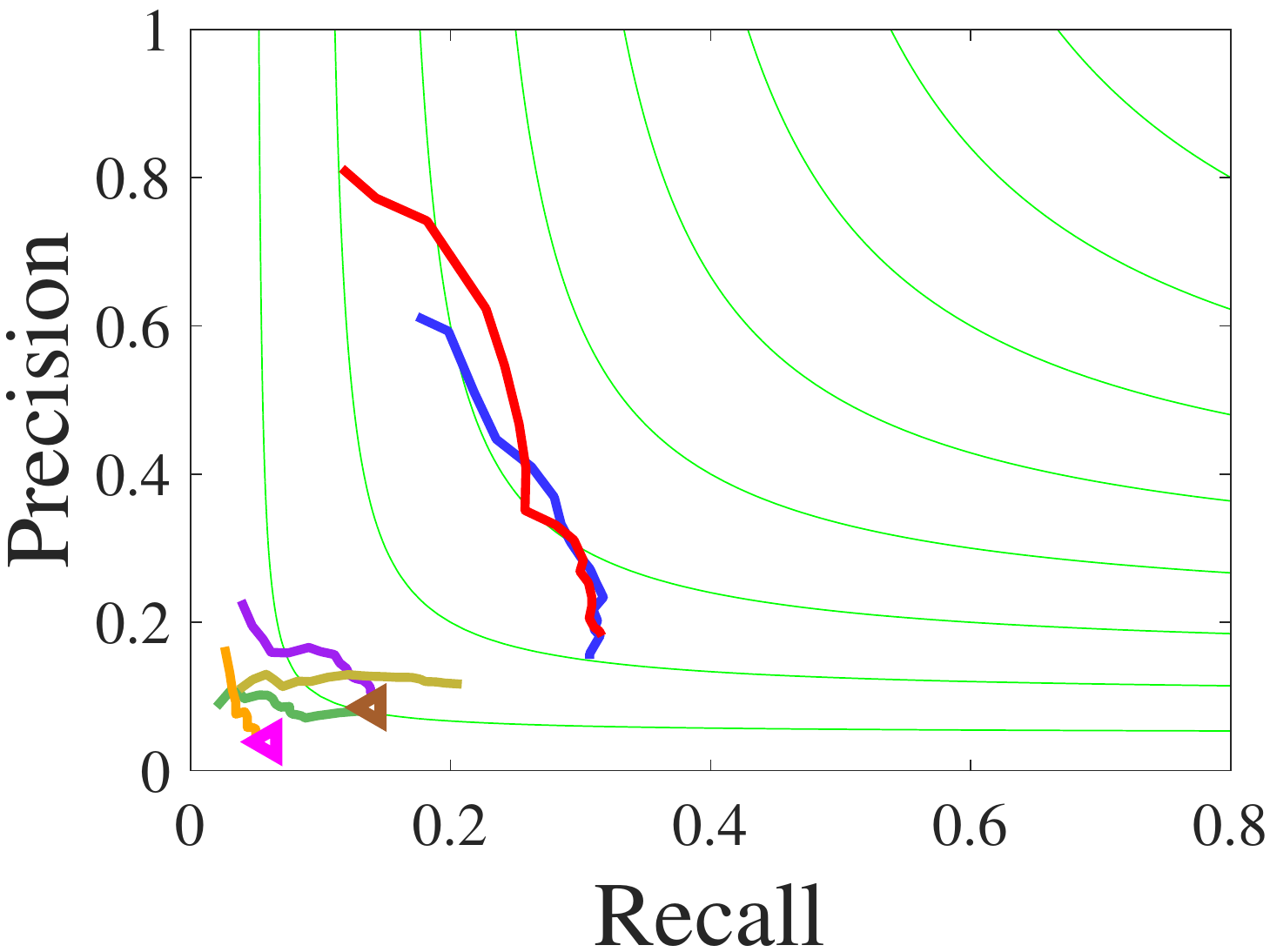}}
	\subfigure[Pair 1-6]
	{\includegraphics[width=0.195\linewidth]{./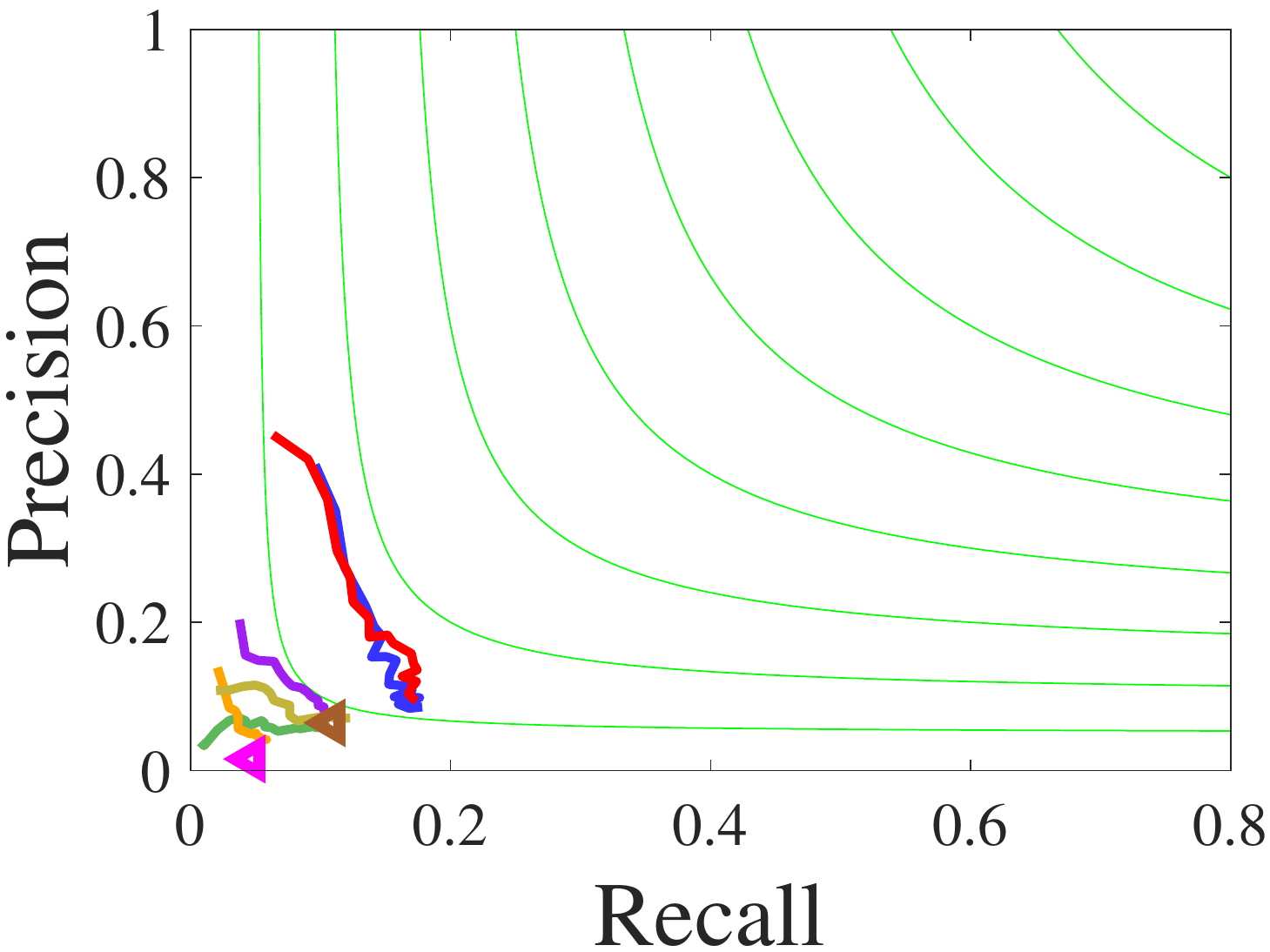}}
	\vspace{-4mm}
	\caption{Average recall (\%) and precision (\%) of all the methods with $ratio=0.1,0.15,...,1$. From pair 1-2 to 1-6, the graph pairs become more challenging for graph matching.}
	\label{fig:sift_pr}
\end{figure*}

\subsection{Results on VGG dataset} \label{sec:vgg}

As the example shown in Fig.~\ref{fig:fig1_com}, graph pairs in PASCAL dataset are generated with similar shapes. Thus, the experiments above evaluate the performance of all the methods in terms of shape consistency. Furthermore, we conducted experiments on more practical dataset to evaluate all the GM methods with more complicated graphs under varying geometric or physical factors. 

We adopted the widely used VGG dataset\footnote{{\scriptsize\url{http://www.robots.ox.ac.uk/~vgg/research/affine/}}} that consists of 8 groups of images (with sizes near 1000$\times$1000) and each group has 6 images with varying blurring, viewpoint, rotation, light, zoom and JPEG compression (see examples in supplementary material). For each group, there exist 5 affine matrices $\mathbf{H}_{1s}\in\mathbb{R}^{3\times3}$, $s=2,...,6$ that represent the ground-truth affine transformation from image 1 to images 2--6, respectively. We first formed graph pairs $\mathcal{G},\mathcal{G}'$ between image 1 and images 2--6 in each group. Then, we utilized feature detector SIFT~\cite{[2004-Lowe]} to generate nodes of graphs. Note that, since the compared methods FGMD, BPFG and MPM were highly time consuming with large-scale complete graphs, we adjusted the threshold of SIFT such that the numbers of output features were around 100 and neglected repeated features. We computed the settings as the same as in PASCAL dataset except that $\{\mathbf{v},\mathbf{v}_a'\}$ were SIFT features and set $\lambda_1=10,\lambda_2=1$.

\begin{figure}
	\centering
	\subfigure[Average recall and precision]
	{\includegraphics[width=0.47\linewidth]{./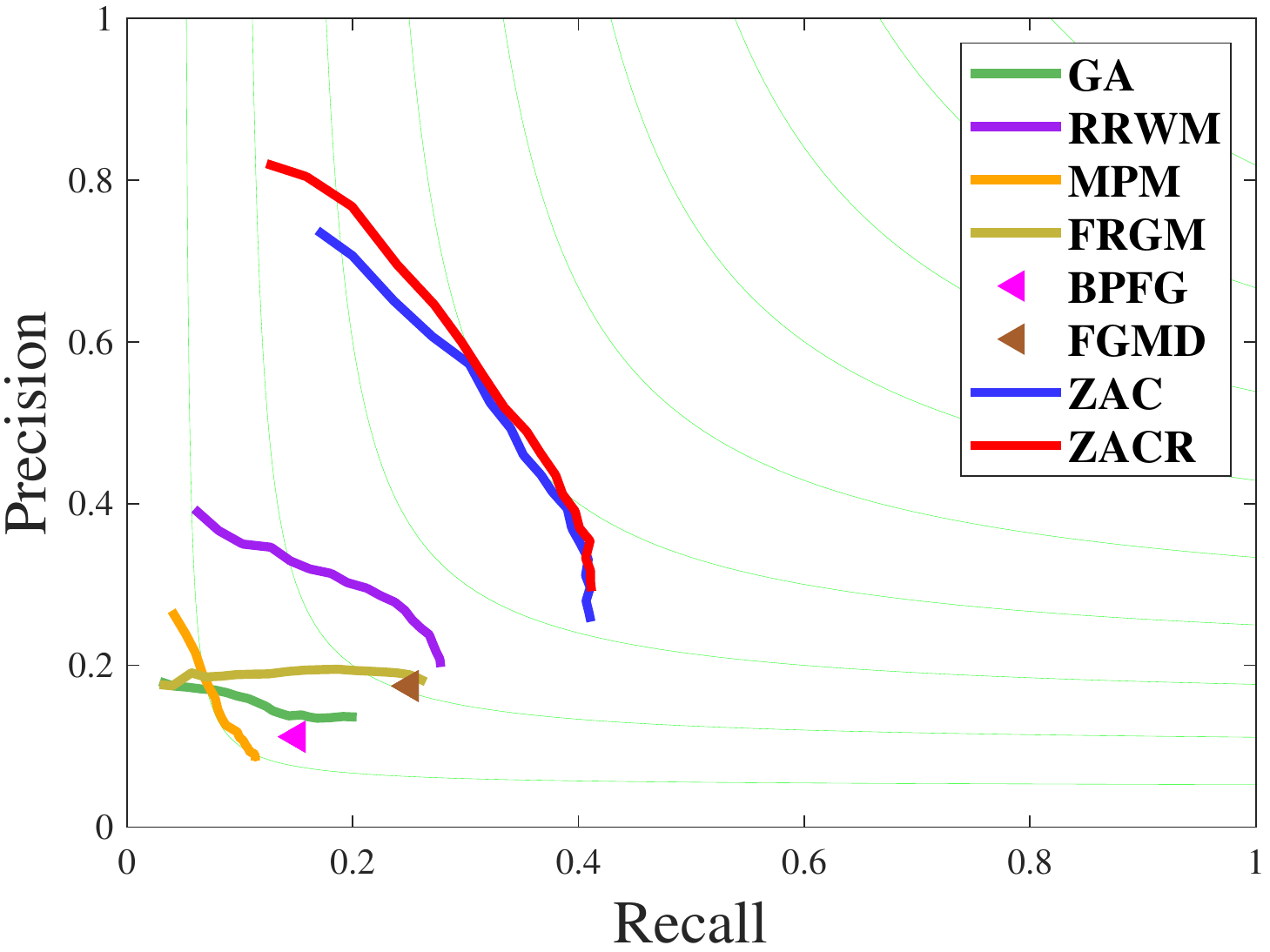}}
	\hspace{3mm}\subfigure[ Time consumption]
	{\includegraphics[width=0.47\linewidth]{./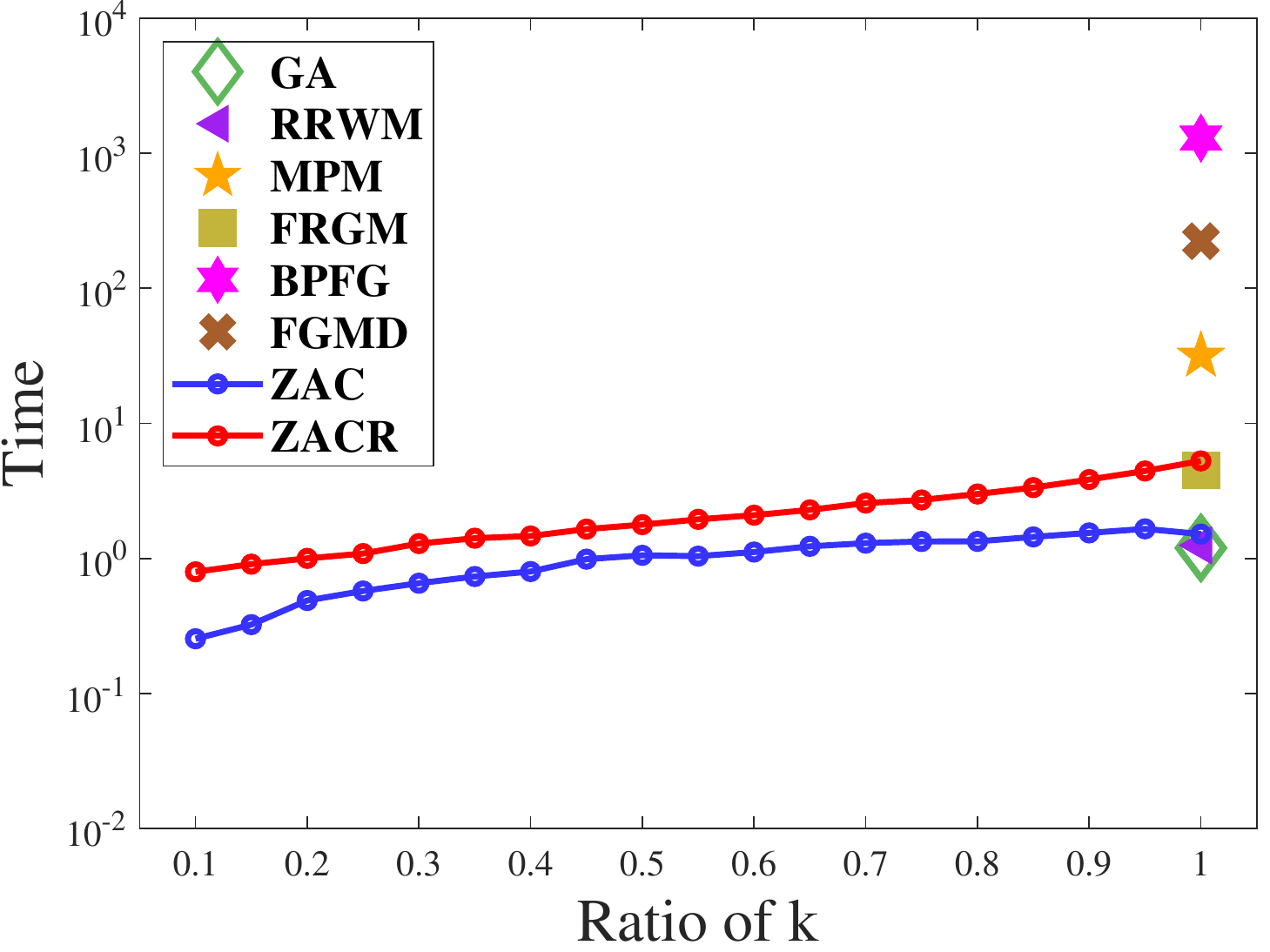}}
	\vspace{-3mm}
	\caption{ Average recall (\%), precision (\%) and time consumption (s) of all the methods on VGG dataset.}
	\label{fig:sift_pr_all}
\end{figure}

An output matching result was evaluated as follows:  for each node $V_i\in\mathcal{G}$ matched with $V'_{a_i}\in\mathcal{G}'$, we calculated its correct correspondence $V'_{\sigma_i}=\mathbf{H}_{1s}V_i$ using the ground-truth affine matrix $\mathbf{H}_{1s}$. Then, if the distance $||V'_{a_i}-V'_{\sigma_i}||$ was less than 10 pixels, the matching between $V_i$ and $V'_{a_i}$ was accepted as a correct matching. We set $k$ with varying $ratio=0.1,0.15,...,1$ for the evaluation of recall and precision. Moreover, we also evaluated our time consumption w.r.t the varying $ratio$, since the number of nodes in refined graphs obtained by our outlier identification and removal approach will be influenced by $ratio$. Note that, the time consumption of the other methods will not be affected by $ratio$ since they match all the nodes in graphs.

As shown in Fig.~\ref{fig:sift_pr}, under the varying geometric or physical conditions, our methods ZAC and ZACR can achieve much higher recall and precision. Fig.~\ref{fig:sift_pr_all} (a) shows the overall average matching accuracy and time consumption w.r.t. varying $ratio$, our method ZAC and ZACR have much better matching accuracy within much less time consumption, even though on complicated graphs with numerous outliers and varying geometric or physical factors in practice.

\subsection{Deformable graph matching}

Deformable graph matching (DGM)~\cite{[2006-Caetano],[2006-Zheng],[2016-Zhou-pami],[2019-FRGM]} is an important subproblem of GM, which focuses on incorporating rigid or non-rigid deformations between graphs. The main idea is to estimate both the correspondence $\mathbf{P}$ and deformation parameters $\tau$ by minimizing the sum of residuals
\begin{equation}\label{eq:dgm}
\min_{\mathbf{P},\tau} J(\mathbf{P},\tau)=\sum_{i,a}\mathbf{P}_{ia}||V'_a-\tau(V_i)||^2 + \lambda_r\Upsilon(\tau),
\end{equation}where ${V}_i,{V}'_a\in\mathbb{R}^d$ are the nodes in $\mathcal{V},\mathcal{V}'$, and $\Upsilon(\cdot)$ is a regularization term. Generally, the rigid or non-rigid deformation is parameterized as $\tau(\mathcal{V})=s\mathcal{V}\mathbf{R}+t$ or $\tau(\mathcal{V})=\mathcal{V}+\mathbf{W}\mathbf{G}$. See~\cite{[2016-Zhou-pami],[2019-FRGM]} for more comprehensive reviews. 

\begin{figure}[!htb]
	\centering
	\subfigure[Rigid deformation]
	{\includegraphics[width=0.46\linewidth]{./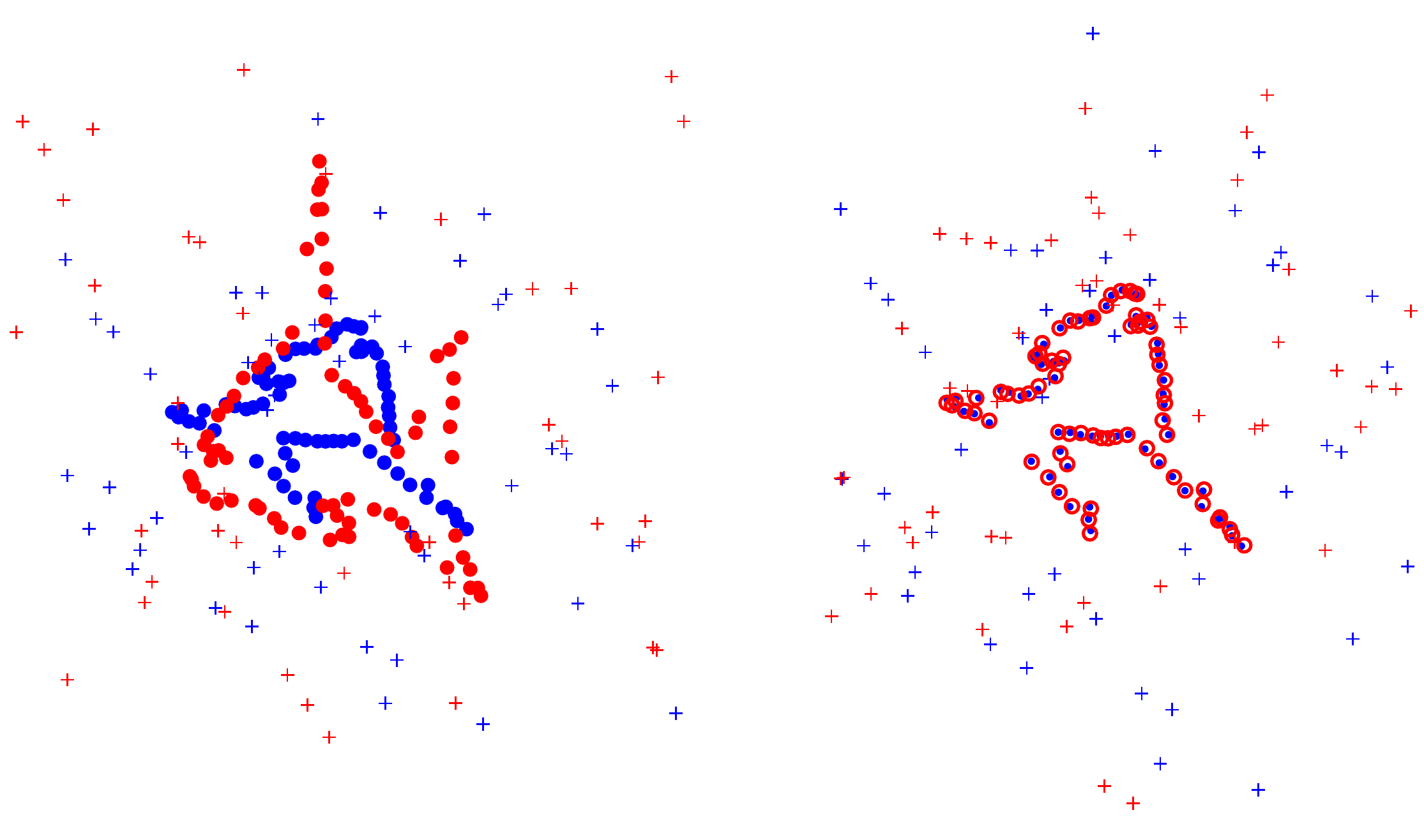}} 
	\hspace{5mm}\subfigure[Non-rigid deformation]
	{\includegraphics[width=0.46\linewidth]{./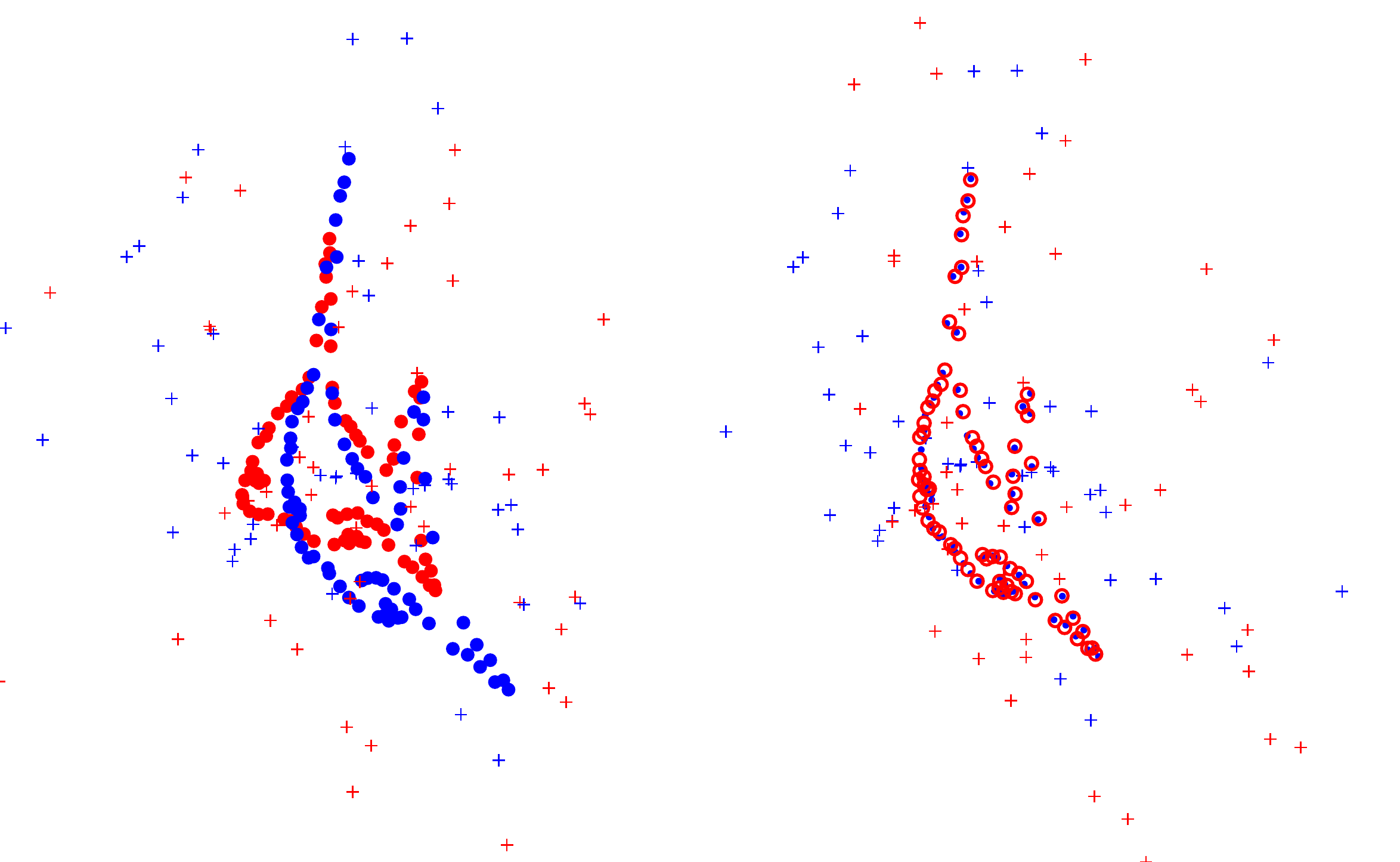}} 
	\vspace{-4mm}
	\caption{Examples of deformable graph matching results of our method ZAC on the graphs under geometric deformations, to which the noises and outliers are also added.}
	\label{fig:point_reg_demo}
	\vspace{-2mm}
\end{figure}
\begin{figure}
	\centering
	{\includegraphics[width=0.8\linewidth]{./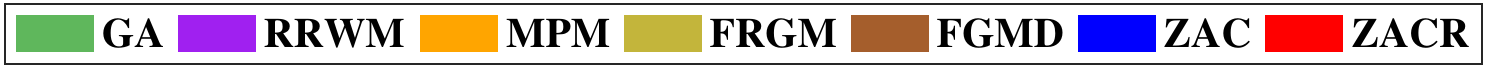}}	
	{\includegraphics[width=0.99\linewidth]{./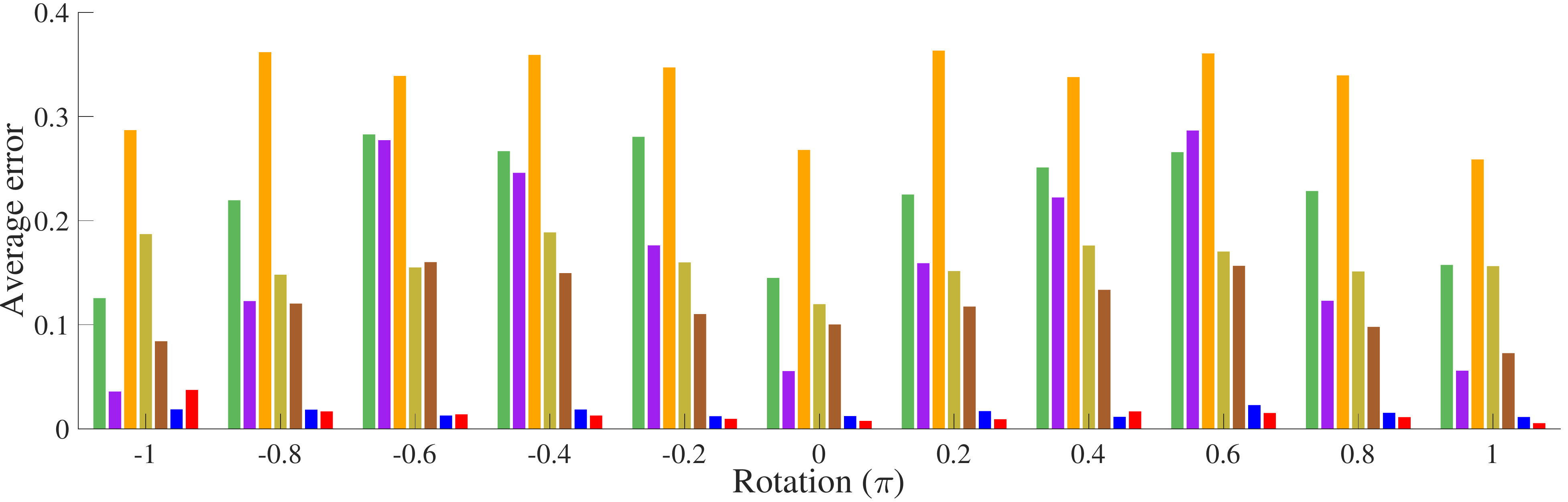}} 
	\vspace{-3mm}
	\caption{Average errors w.r.t. varying rotations.}
	\label{fig:point_reg1}
\end{figure}
\begin{figure}
	\centering
	{\includegraphics[width=0.8\linewidth]{./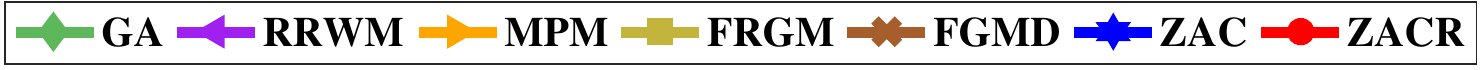}}	
	{\includegraphics[width=0.48\linewidth]{./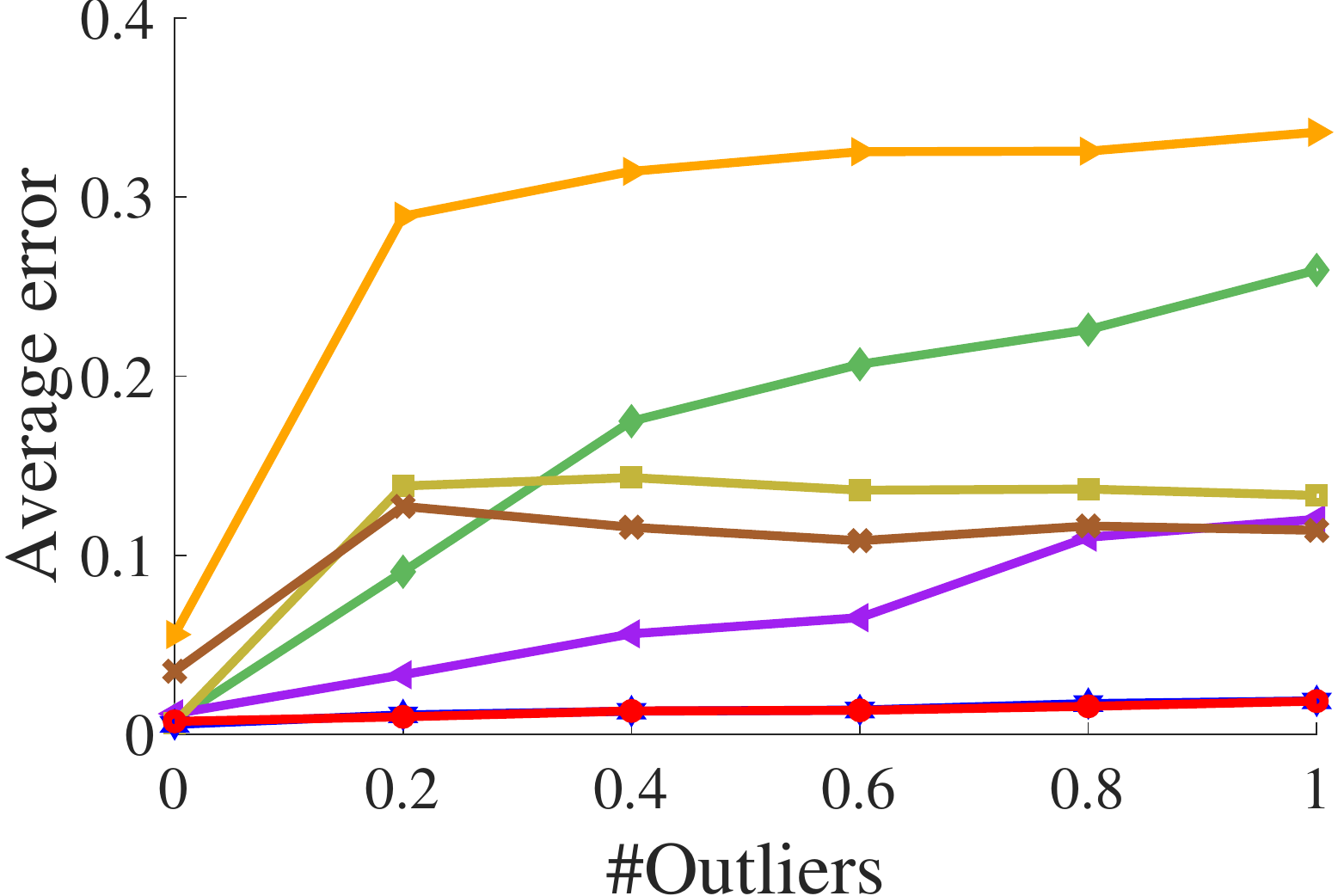}}
	\hspace{2mm}{\includegraphics[width=0.48\linewidth]{./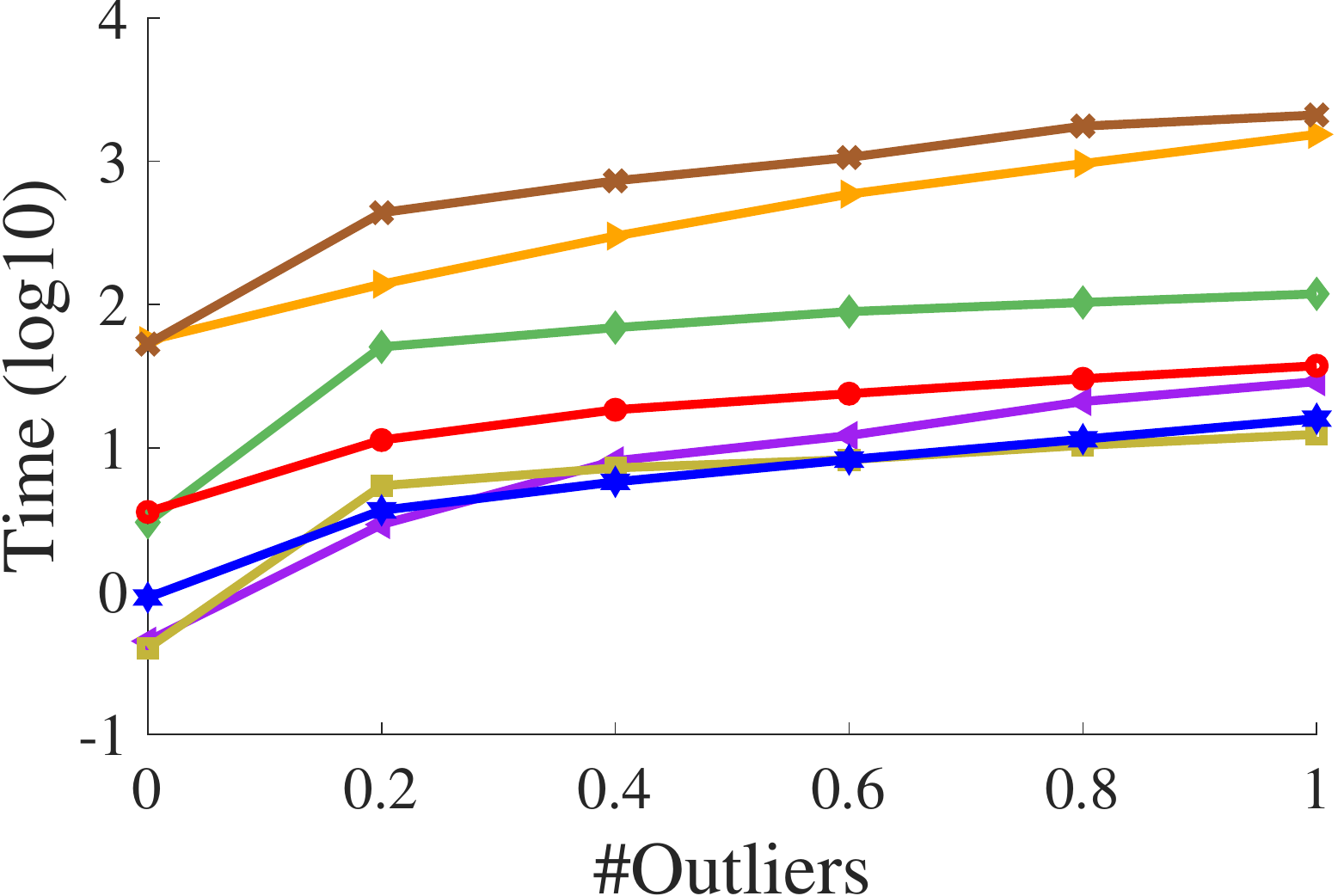}}
	\vspace{-5mm}
	\caption{Average errors and time consumptions (s) w.r.t. varying outliers.}
	\label{fig:point_reg2}
\end{figure}


Finding correct correspondence $\mathbf{P}$ plays the central role for solving Eq.~\eqref{eq:dgm}. Once $\mathbf{P}$ is well-estimated, the geometric parameter $\tau$ can be solved with closed form~\cite{[2016-Zhou-pami],[2019-FRGM]}. In this section, we applied all the GM methods with $k=0.5\times\min\{m,n\}$ to find $\mathbf{P}$ and then computed $\tau$. We iteratively executed this procedure till it converges. All the GM methods applied the same settings used in Sec.~\ref{sec:vgg} except that the node attributes $\mathbf{v}_i,\mathbf{v}'_a$ were shape context here.

We adopted the widely used 2D shape templates in~\cite{[2010-Myronenko],[2016-Ma],[2016-Zhou-pami],[2019-FRGM]} for evaluation and comparison. We uniformly sampled $50\%$ points of the shape template as inliers of $\mathcal{G}$ and $\mathcal{G}'$. And slight noises with uniform distribution $U(0,0.01)$ were also added to $\mathcal{G}$ and $\mathcal{G}'$. 

In this section, we conducted two series of experiments on graphs with varying deformations and outliers. First, we measured the robustness of each algorithm to rotations in rigid deformation. We rotated $\mathcal{G}'$ with varying degrees in $[-\pi,\pi]$ and then randomly added 10--50 outliers with Gaussian distribution $N(0,0.5)$ to both $\mathcal{G}$ and $\mathcal{G}'$. Second, we evaluated the robustness to outliers. For the rigid deformation, we randomly rotated $\mathcal{G}'$ with degrees in $[-0.1\pi,0.1\pi]$. For the non-rigid deformation, we deformed $\mathcal{G}'$ following the settings in~\cite{[2016-Zhou-pami],[2019-FRGM]} by weight matrices $\mathbf{W}$ with Gaussian distribution $N(0,0.5)$. And then, we incrementally added $\{0\%,20\%,...,100\%\}\cdot\#inliers$ numbers of outliers with Gaussian distribution $N(0,0.5)$ to $\mathcal{G}$ and $\mathcal{G}'$. See examples in Fig.~\ref{fig:point_reg_demo}. For all GM methods, we adopted the rotation-invariant shape context advised by~\cite{[2016-Ma],[2019-FRGM]} for rigid deformations. For evaluation, we computed the average error between the transformed inliers $\{\tau(V_i)\}$ and their ground-truth matching point $\{V'_{\delta_i}\}$, {\ie}, $\frac{1}{\#inliers}\sum_i||\tau(V_i)-V'_{\delta_i}||$.

Fig.~\ref{fig:point_reg1} shows the average errors of all the methods w.r.t. varying rotations. We can see that our methods ZAC and ZACR can nearly perfectly recover and match all the  graphs across all the rotations. Fig.~\ref{fig:point_reg2} reports the average errors and time consumptions on graphs with rigid and nonrigid deformations. Our methods have much less average error than all the other methods and runs faster than most of them. Note that, we did not compare BPFG due to its extremely unacceptable time consumption (more than 5 hours to match only one pair of graphs) in this experiment.

For more comprehensive evaluation, we also compared with two efficient point registration algorithms GLS~\cite{[2016-Ma]} and CPD~\cite{[2010-Myronenko]}, which can address the deformable graph matching problem from the perspective of point registration. The comparison results are shown in Tab.~\ref{tab:point_gls_cpd}, our methods ZAC and ZACR achieve comparable results with GLS and CPD for graphs without outliers, and have less average errors for complicated graphs with numerous outliers.

\begin{table}[!htb]
	\centering
	\scriptsize
	\begin{tabular}{m{0pt}p{25pt}|p{14pt}p{14pt}p{14pt}p{14pt}p{14pt}p{16pt}|p{5pt}|}
		\toprule[0.75pt]
		{\diagbox[width=15mm,trim=l]{Methods}{\#Outliers}}&& 0\% & 20\% & 40\%  & 60\%  & 80\% & 100\%& \\
		\hline
		\rule{0pt}{9pt}&\hspace{-2mm}{GLS~\cite{[2016-Ma]}}
		&\bf0.005  &0.006   &0.011   &0.021    &0.030  &0.044 &\\
		\rule{0pt}{9pt}&\hspace{-2mm}{CPD~\cite{[2010-Myronenko]}}
		&\bf0.005  &0.052   &0.085   &0.112   &0.124   &0.135 &\multirow{3}{2pt}{\rotatebox{-90}{\hspace{-3mm} Rigid}}\\
		\cline{1-8}
		\rule{0pt}{9pt}&{\hspace{-0mm} ZAC}
		&\bf0.005  &\bf0.005&\bf0.005&\bf0.005&\bf0.006&\bf0.006&\\
		\rule{0pt}{9pt}&{\hspace{-0.5mm} ZACR}
		&0.009    &\bf0.005 &0.006   &\bf0.005&\bf0.006&\bf0.006&\\
		\hline
		\hline	
		\rule{0pt}{9pt}&\hspace{-2mm}{GLS~\cite{[2016-Ma]}}
		&0.010     &{\bf0.014}&0.043    &0.050     &0.064     &0.092 
		&{\multirow{3}{2pt}{\rotatebox{-90}{\hspace{0.0mm} Non-rigid}}}\\
		\rule{0pt}{9pt}&\hspace{-2mm}{CPD~\cite{[2010-Myronenko]}}
		&\bf0.006  &0.015    &0.042     &0.065     &0.069     &0.083 &\\
		\cline{1-8}
		\rule{0pt}{9pt}&\hspace{-0mm} {ZAC}
		&{\bf0.006}&0.016    &{\bf0.021}&{0.022}&{0.028}&{\bf0.032}&\\
		\rule{0pt}{9pt}&{\hspace{-0.5mm} ZACR}
		&\bf0.006  &\bf0.014 &\bf0.021  &\bf0.021  &\bf0.025  &\bf0.032&\\
		\bottomrule[0.75pt]
	\end{tabular}
	\vspace{-2mm}
	\caption{Average errors w.r.t. varying outliers on graphs with rigid and non-rigid deformations.}
	\label{tab:point_gls_cpd}
\end{table}

\section{Conclusion}
This paper presents the zero-assignment constraint to address the problem of graph matching in the presence of outliers. Beyond the empirical criterion, we propose both theoretical foundations and quantitative analyses for this problem, on which bases we are inspired to construct reasonable objective function and find out the sufficient condition for its rationality. Moreover, we propose an efficient algorithm consisting of fast optimization and outlier identification, which ensures us to handle complicated graphs with numerous cluttered outliers in practice and achieve state-of-the-art performance in terms of accuracy and efficiency. In future work, we will go further to consolidate the theoretical foundation of graph matching problem with outliers by extending the zero-assignment constraint to the famous QAP formulations of graph matching in Eq.~\eqref{eq:gm=lawler} and Eq.~\eqref{eq:gmKoom22}.

\section*{Acknowledgement}
This work was supported by the National Natural Science Foundation of China under Grant 61771350 and Grant 61922065.


{\bibliographystyle{ieee_fullname}
\bibliography{egbib_bpgm}
}

\newpage
\mbox{}
\newpage
\includepdf[pages={1}]{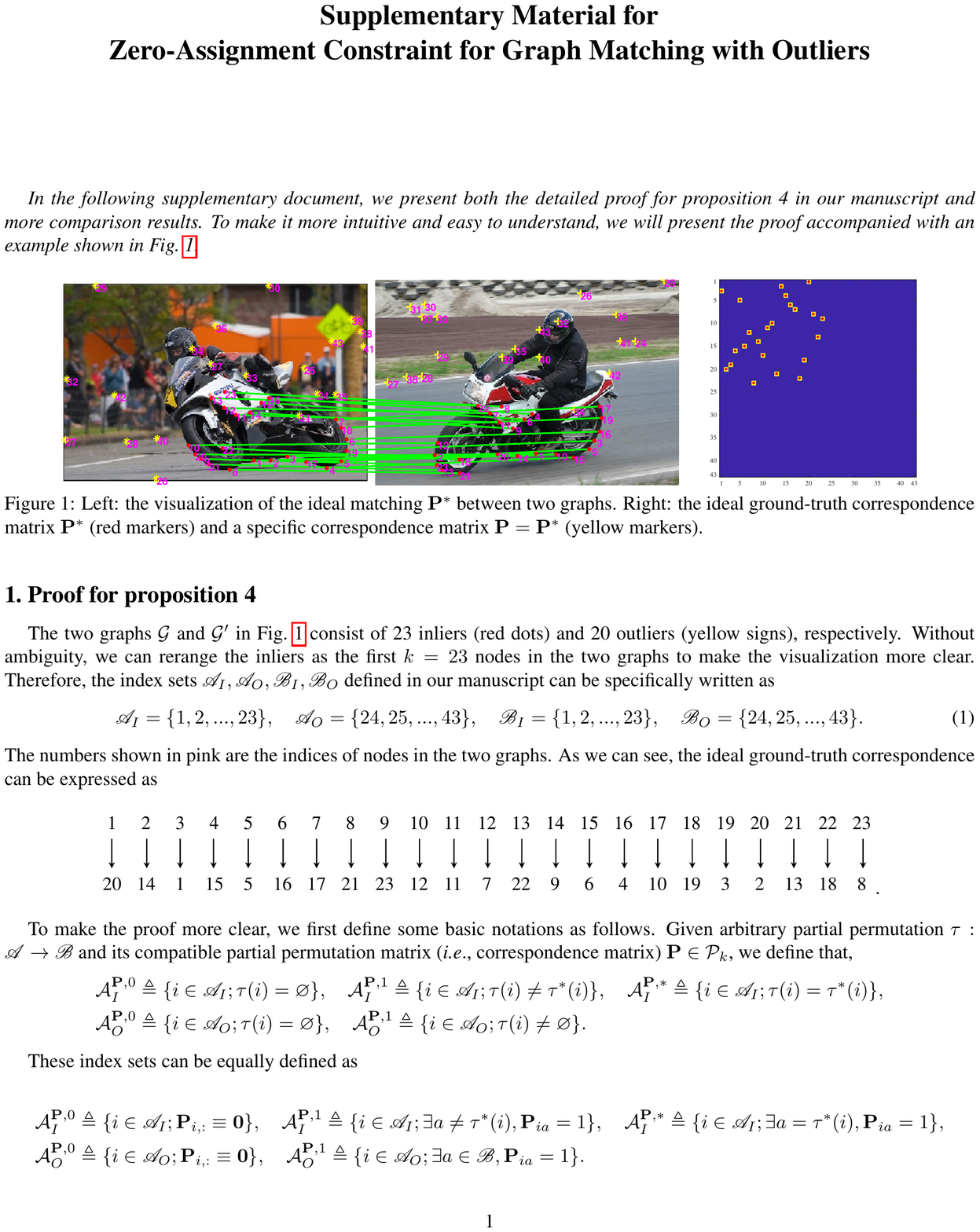}
\includepdf[pages={2}]{1149-supp.pdf}
\includepdf[pages={3}]{1149-supp.pdf}
\includepdf[pages={4}]{1149-supp.pdf}
\includepdf[pages={5}]{1149-supp.pdf}
\includepdf[pages={6}]{1149-supp.pdf}
\includepdf[pages={7}]{1149-supp.pdf}
\includepdf[pages={8}]{1149-supp.pdf}

\end{document}